\documentclass[letterpaper, 10 pt,conference]{ieeeconf}
\IEEEoverridecommandlockouts 
\overrideIEEEmargins

\usepackage{cite}

\usepackage{amsmath,amssymb,amsfonts}
\usepackage{graphicx}
\usepackage{textcomp}

\DeclareMathOperator*{\argmin}{arg\,min}
\usepackage[table,xcdraw]{xcolor}
\usepackage{subcaption}
\usepackage{authblk}

\usepackage{enumitem}
\usepackage{hyperref}
\usepackage{breqn}
\usepackage{algpseudocode}
\usepackage[linesnumbered,ruled,lined]{algorithm2e}
\usepackage{mdframed}
\usepackage{array}
\usepackage{makecell}
\usepackage{tcolorbox}
\usepackage{svg}

\newtheorem{proposition}{Proposition}

\allowdisplaybreaks

\begin{document}

\title{Bi-level Trajectory Optimization on Uneven Terrains with Differentiable Wheel-Terrain Interaction Model.
}

\author{
Amith Manoharan$^{1}$,
Aditya Sharma$^{2}$,
Himani Belsare$^{2}$,
Kaustab Pal$^{2}$,\\
K. Madhava Krishna$^{2}$, and
Arun Kumar Singh$^{1}$

\thanks{$^1$ are with the Institute of Technology, University of
Tartu, Tartu, Estonia. }
\thanks{$^2$ are with RRC, IIIT Hyderabad, India.}
\thanks{This work was co-funded by the European Social Fund and Estonian Research Council via project TEM-TA101, Grant PSG753, and in part by grants available through Mathworks India.
}
}

\maketitle

\begin{abstract}
Navigation of wheeled vehicles on uneven terrain necessitates going beyond the 2D approaches for trajectory planning. Specifically, it is essential to incorporate the full $6dof$ variation of vehicle pose and its associated stability cost in the planning process. To this end, most recent works aim to learn a neural network model to predict vehicle evolution. However, such approaches are data-intensive and fraught with generalization issues.

In this paper, we present a purely model-based approach that just requires the digital elevation information of the terrain. Specifically, we express the wheel-terrain interaction and $6dof$ pose prediction as a non-linear least squares (NLS) problem. As a result, trajectory planning can be viewed as a bi-level optimization. The inner optimization layer predicts the pose on the terrain along a given trajectory, while the outer layer deforms the trajectory itself to reduce the stability and kinematic costs of the pose. 

We improve the state-of-the-art in the following respects. First, we show that our NLS-based pose prediction closely matches the output of a high-fidelity physics engine. This result, coupled with the fact that we can query gradients of the NLS solver, makes our pose predictor a differentiable wheel-terrain interaction model. We further leverage this differentiability to efficiently solve the proposed bi-level trajectory optimization problem. Finally, we perform extensive experiments and comparisons with a baseline to showcase the effectiveness of our approach in obtaining smooth, stable trajectories.

\end{abstract}


\section{Introduction}
Applications like forestry, construction, and search and rescue require wheeled vehicles to navigate over uneven terrains. Thus, for automation in these use cases, we need a trajectory planner that can account for the vehicle tip-over stability over uneven terrains. This, in turn, requires explicitly predicting the vehicle's $6dof$ pose on the terrain and the kinematic and stability costs associated with those poses.

There are broadly two ways to predict the pose of a wheeled vehicle on an uneven terrain. The first approach is to use high-fidelity physics simulators that can model complex wheel-terrain interaction \cite{wiberg2021control}, often in the form of complex differential equations \cite{gattupalli2013simulation}. However, these simulators are often too slow to be used during online planning. Alternately, wheel-terrain interaction and pose prediction can be learned from recorded vehicle motions on different terrains \cite{agishev2022trajectory}. However, purely data-driven approaches are data-hungry and often struggle to generalize to novel scenarios.

Our main motivation in this paper is to make the physics/model-based approach more computationally tractable by finding the appropriate simplified abstraction for the wheel-terrain interaction. Moreover, we aim to make the interaction model differentiable and parallelizable to ensure online planning. We achieve these features through some core algorithmic innovations, which are summarized below, along with their benefits.

\begin{figure}[!t] 
    \centering
    \includegraphics[width=1.0\linewidth]{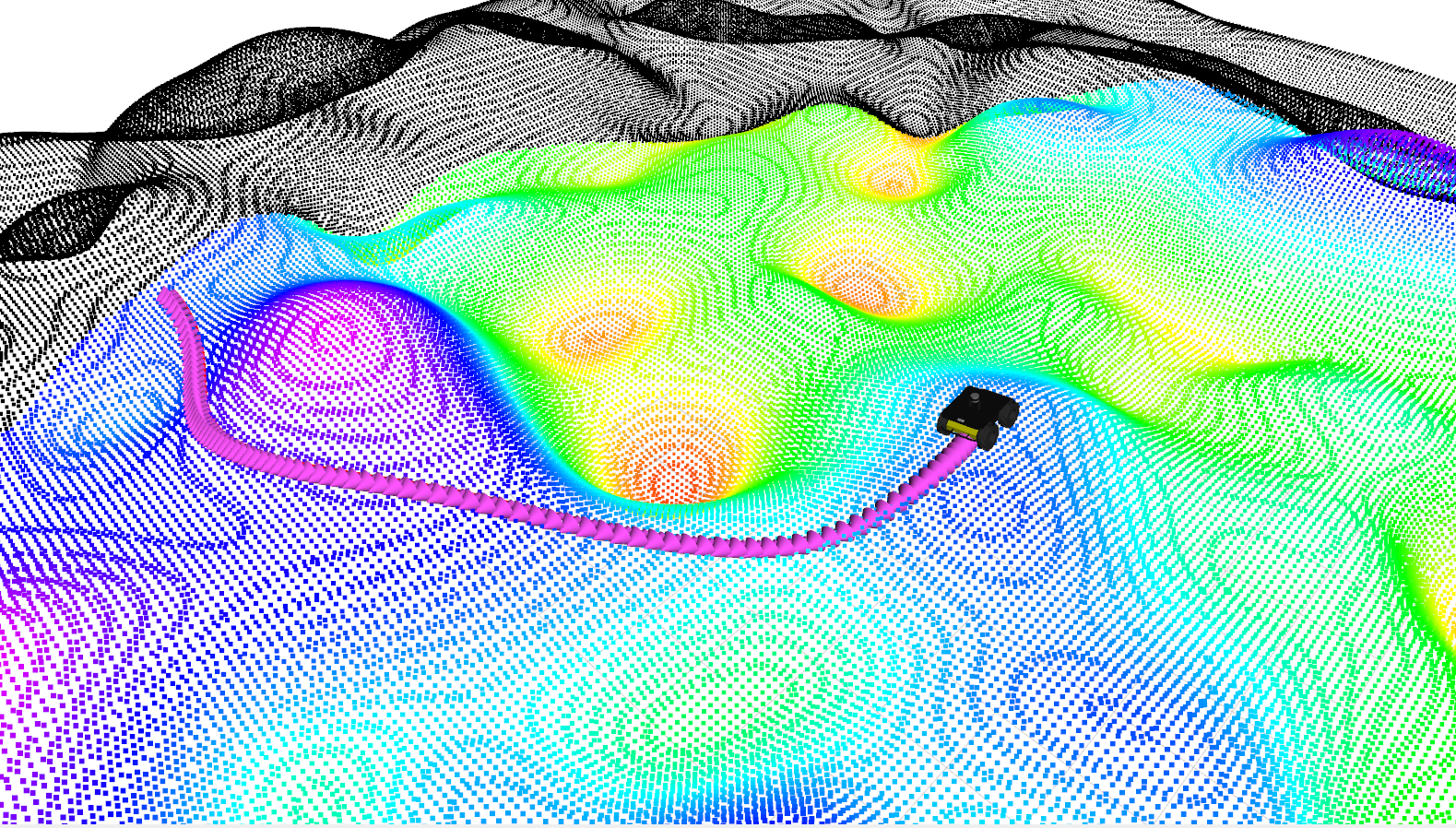}
    \caption{\footnotesize Husky navigating an uneven terrain. The black points represent the entire terrain, whereas the coloured points represent the terrain's local patch. All points within a given radius from the Husky are used to model the terrain, which is used to compute a stable trajectory to the goal. As seen above, the trajectory (shown in pink) successfully avoids the ditch and goes around it to reach the goal.}
    \vspace{-0.2mm}
    \label{fig:teaser_image}
    \vspace{-0.8cm}
\end{figure}

\noindent \textbf{Algorithmic Contribution:} We jointly view the wheeled vehicle and the underneath terrain through the lens of closed-loop kinematics, typically used for modeling parallel manipulators. This, in turn, leads to a set of coupled non-linear equations that dictate how the vehicle's $6dof$ pose depends on the terrain surface geometry. The pose prediction can then be obtained by minimizing the residual of the non-linear equations, which, in our case, is done through a least-squares optimization problem. Importantly, our set-up allows for differentiating through the least squares solver using implicit function theorem to ascertain how the pose of the vehicle will vary if its position on the terrain is perturbed. Thus, in essence, our non-linear least squares (NLS) approach provides a differentiable wheel-terrain interaction model, albeit at the kinematic level. 

The differentiability feature also opens up efficient ways of performing trajectory planning on uneven terrain. Specifically, \textit{for the first time}, we show that the planning problem can be viewed as a bi-level optimization. Its inner layer (NLS) predicts pose along a given trajectory while the outer layer perturbs the trajectory itself to lower the kinematic and stability costs. The implicit gradients from the inner layer can be used within any momentum-based gradient descent approach to efficiently solve the bi-level problem.

\noindent \textbf{State-of-the-Art Performance:} We show that the prediction from our NLS-based approach closely matches the output from a high-fidelity physics engine. As a result, the former provides a computationally cheap alternative for learning-based approaches that require tele-operating vehicles on potentially dangerous terrains. We show that by modeling the stability costs through force-angle measure \cite{papadopoulos2000force}, our trajectory optimizer produces trajectories that successfully avoid risky areas such as ditches and valleys. Finally, we show that our gradient-based approach is competitive with more computationally demanding sampling-based optimization, such as the Cross Entropy Method \cite{rubinstein1999cross} that does not require access to implicit gradients.

\section{Related Works}

\subsubsection*{Learning Based Wheel-Terrain Interaction}
An unsupervised method for the classification of a ground rover's slip events based on proprioceptive signals is proposed in \cite{bouguelia2017unsupervised}. The method is able to automatically discover and track various degrees of slip on the given terrain.
A meta-learning-based approach is proposed in \cite{banerjee2020adaptive} to predict future dynamics by leveraging rover-terrain interaction data. In \cite{ugenti2021learning}, an approach for learning and predicting the motion resistance encountered by a vehicle while traversing is studied using the information obtained from a stereovision device. A learning-based approach for predicting the pitch and roll angle from a depth map was proposed in \cite{datar2023learning}. Authors in \cite{vsalansky2021pose} show that purely data-driven approaches struggle to accurately predict the vehicle pose on the terrain. Thus, they embed physics-based priors into the learning pipeline. Our current approach is closely related \cite{vsalansky2021pose}, as our NLS-based pose predictor can also be potentially used as a physics-based prior.

\subsubsection*{Motion Planning on Uneven Terrains}
Model-based planning on uneven terrains requires a pose predictor module. This can be either a learned model as used in \cite{agishev2022trajectory}, or that obtained from first principles of physics and differential equations \cite{eathakota2011quasi}. A hybrid trajectory optimization method for generating stable and smooth trajectories for articulated tracked vehicles is proposed in \cite{xu2023hybrid}. A vehicle-terrain contact model is developed to divide the vehicle's motion into hybrid modes of driving and traversing, which is converted into a nonlinear programming problem to be solved in a moving-horizon manner. A terrain pose mapping to describe the impact of terrain on the vehicle is proposed in \cite{xu2023efficient}, based on which a trajectory optimization framework for car-like vehicles on uneven terrain is built.
Our proposed method differs from these cited works in how we leverage implicit gradients from a first principle-based wheel-terrain interaction model for trajectory optimization.

\subsubsection*{End-to-End Learning Based Approaches}
A learning-based end-to-end navigation for planetary rovers is given in \cite{feng2023learning}. A deep reinforcement learning-based navigation method is proposed to autonomously guide the rover towards goals through paths with low wheel slip ratios. An end-to-end network architecture is proposed, in which the visual perception and the wheel-terrain interaction are combined to learn the terrain properties implicitly. In \cite{hu2021sim}, a sim-to-real pipeline for a vehicle to learn how to navigate on rough terrains is presented. A deep reinforcement learning architecture is used to learn a navigation policy from data obtained from simulated environments. An approach that employs a fully-trained deep reinforcement learning network that uses elevation maps of the environment, vehicle pose, and goal as inputs to compute an attention mask of the environment is proposed in \cite{weerakoon2022terp}. The attention mask is then used to identify reduced stability regions in the terrain.

\subsubsection*{Connection to Author's Prior Work:} The proposed NLS-based pose predictor builds upon and improves our prior work \cite{singh2016feasible}. Herein, small angle approximation and local planar assumption on the underlying terrain were used to obtain closed-form solutions for pose prediction. However, such an approach is not appropriate for highly uneven terrains.  
\section{Preliminaries}

\begin{figure*}
\begin{subfigure}{7cm}
  \includegraphics[width=\linewidth]{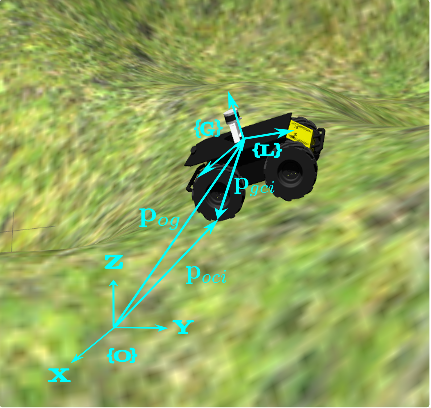}  
  \vspace{-5mm}
  \caption{}
  \label{fig:husky}
  \vspace{-1mm}
\end{subfigure}\hspace{20mm}
\begin{subfigure}{7cm}
  \includegraphics[width=7cm,height=7cm]{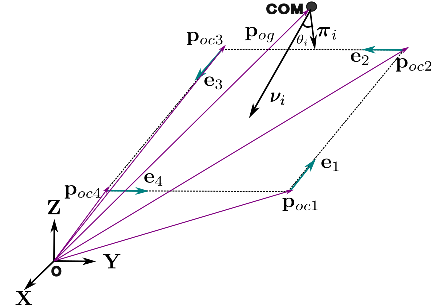}  
  \vspace{-5mm}
  \caption{}
  \label{fig:stability_diagram}
  \vspace{-1mm}
\end{subfigure}
\caption{\footnotesize (a) A four-wheeled vehicle with the geometry vectors describing the holonomic constraints. (b) Definition of the vectors associated with tip-over stability.}
\label{fig:terrain_contact_geometry}
\end{figure*}

\subsubsection*{Symbols and Notation} Scalars are denoted by normal lowercase letters, whereas bold font variants are used for vectors. Matrices are represented by uppercase bold font letters. The superscript $T$ is used to define the transpose of a matrix or a vector.

\subsection{Primer on Chain Rule of Differentiation}
\noindent This section covers some basic notations from multi-variable calculus that will be crucial for developing the implicit differentiation rules later in the paper. Most of the developments follow from the work \cite{gould2021deep}.

\noindent Consider a function $\psi(\lambda, \phi, \mu)$ where $\phi$ and $\mu$ are themselves functions of $\lambda$. In such a scenario, the chain rule of differentiation takes the following form.
\begin{equation}
    \frac{\mathrm{d}}{\mathrm{d}\lambda}\psi = \frac{\partial \psi}{\partial \lambda}\frac{\mathrm{d}\lambda}{\mathrm{d}\lambda} + \frac{\partial \psi}{\partial \phi}\frac{\mathrm{d}\phi}{\mathrm{d}\lambda} + \frac{\partial \psi}{\partial \mu}\frac{\mathrm{d}\mu}{\mathrm{d}\lambda}. \label{eqn:chain_rule}
\end{equation}
For a scalar function over multiple variables, $\psi (\lambda_1, \lambda_2, \dots \lambda_n) : \mathbb{R}^n \rightarrow \mathbb{R}$, the derivative vector can be written as 
\begin{equation}
    \mathrm{D}\psi = \begin{bmatrix}
        \frac{\partial \psi}{\partial \lambda_1},\frac{\partial \psi}{\partial \lambda_2},\dots,\frac{\partial \psi}{\partial \lambda_n} 
    \end{bmatrix} \in \mathbb{R}^{1 \times n}.
\end{equation}
Analogously,  for vector-valued functions $\boldsymbol{\psi} : \mathbb{R} \rightarrow \mathbb{R}^m$, defined over a scalar argument, we can write
\begin{equation}
    \mathrm{D}\psi = \begin{bmatrix}
        \frac{\mathrm{d} \psi_1}{\mathrm{d} \lambda},\dots,\frac{\mathrm{d} \psi_m}{\mathrm{d} \lambda} 
    \end{bmatrix} \in \mathbb{R}^{m \times 1}.
\end{equation}
The general definition of $\mathrm{D}\boldsymbol{\psi}$ of $\boldsymbol{\psi} : \mathbb{R}^n \rightarrow \mathbb{R}^m$ can be written as an $m \times n$ matrix with entries
\begin{equation}
    \left(\mathrm{D} \boldsymbol{\psi}(\lambda) \right)_{ij} = \frac{\partial \boldsymbol{\psi}_i}{\partial \lambda_j}(\lambda).
\end{equation}
Now, the chain rule for $h(\lambda) = \zeta(\psi(\lambda))$ can be written as
\begin{equation}
    \mathrm{D}h(\lambda) = \mathrm{D}\zeta(\psi(\lambda))\mathrm{D}\psi(\lambda).
\end{equation}
For the partial derivatives, the variable over which the derivative is computed is written as a subscript. For example, $\mathrm{D}_{\lambda} \psi(\lambda,\phi)$ denotes partial derivative with respect to $\lambda$. Similarly, $\mathrm{D}_{\lambda \phi}^2\psi$ means $\mathrm{D}_{\lambda}(\mathrm{D}_{\phi}\psi)^T$.

Finally, when subscripts are not given for multi-variate functions, the operator $\mathrm{D}$ means the total derivative with respect to the independent variables. So, with $\lambda$ as the independent variable, the vector form of Equation \eqref{eqn:chain_rule} is represented by
\begin{equation}
    \mathrm{D}\psi = \mathrm{D}_{\lambda}\psi + \mathrm{D}_{\phi}\psi\mathrm{D}\phi + \mathrm{D}_{\mu}\psi\mathrm{D}\mu. 
\end{equation}

\subsection{Trajectory Parametrization}\label{sec:traj_parametrization}

\noindent We assume that the vehicle follows a bi-cycle model. Moreover, we can leverage its differential flatness property \cite{han2023efficient} to plan directly in the positional space ($x_k, y_k$). The control inputs, for example, linear velocity and curvature, can then be obtained as a function of positional variables and their higher-order derivatives. We parametrize the position-level trajectory of the vehicle in terms of polynomials in the following form:

\begin{align}
    \begin{bmatrix}
        x_1, \dots, x_n 
    \end{bmatrix} = \mathbf{W}\mathbf{c}_{x},
     \begin{bmatrix}
        y_1, \dots, y_n 
    \end{bmatrix} = \mathbf{W}\mathbf{c}_{y},
    \label{param} 
\end{align}

\noindent where, $\mathbf{W}$ is a matrix formed with time-dependent polynomial basis functions and ($\mathbf{c}_{x}, \mathbf{c}_{y}$) are the coefficients of the polynomial. We can also express the derivatives in terms of $\dot{\mathbf{W}}, \ddot{\mathbf{W}}$.

\subsection{Tip-over stability criteria}
\noindent A vehicle's stability at a given point on the terrain can be calculated by the force angle measure or tip-over stability criteria proposed in \cite{rey1997online,papadopoulos2000force}. Fig.~\ref{fig:stability_diagram} shows the polygon formed with the ground contact points of a four-wheel vehicle. The edges of the polygon given by the vectors $\mathbf{e}_i$ represent the tip-over axes, i.e., the axes about which the vehicle could possibly tip-over. These axes can be computed in the following way:

\begin{align}
    \mathbf{e}_i &= \mathbf{p}_{oc,i+1} - \mathbf{p}_{oc,i}, \forall i = 1,2,3, \\
    \mathbf{e}_4 &= \mathbf{p}_{oc,1} - \mathbf{p}_{oc,4}. \nonumber
\end{align}

We define the unit vectors $\mathbf{\hat{e}}_i = \frac{\mathbf{e}_i}{\lVert \mathbf{e}_i \rVert}$, $\boldsymbol{\hat{\pi}}_i = \frac{\boldsymbol{\pi}_i}{\lVert \boldsymbol{\pi}_i \rVert}$, and $\boldsymbol{\hat{\nu}}_i = \frac{\boldsymbol{\nu}_i}{\lVert \boldsymbol{\nu}_i \rVert}$, where $\boldsymbol{\pi}_i$ are the tip-over axis normals which intersect the vehicle's center of mass (COM) and $\boldsymbol{\nu}_i$ are the components of the force acting on the vehicle which contribute to the tip-over. These vectors can be calculated as:

\begin{align}
    \boldsymbol{\pi}_i &= (\mathbf{I} - \mathbf{\hat{e}}_i\mathbf{\hat{e}}_i^T)(\mathbf{p}_{oc,i+1}-\mathbf{p}_{og}), \\
    \boldsymbol{\nu}_{i} &= (\mathbf{I} - \mathbf{\hat{e}}_i\mathbf{\hat{e}}_i^T)\boldsymbol{\nu},
\end{align}

\noindent where $\mathbf{I}$ is the identity matrix and $\boldsymbol{\nu}$ is the total force acting on the vehicle.
The angles between the forces and the tip-over axis normals can be computed as:

\begin{equation}
    \theta_i = \sigma_i \arccos{(\boldsymbol{\hat{\nu}}_i \cdot \boldsymbol{\hat{\pi}}_i)},
\end{equation}
where,
\begin{equation}
    \sigma_i = \begin{cases}
        +1, & (\boldsymbol{\hat{\pi}}_i \times \boldsymbol{\hat{\nu}}_i) \cdot \mathbf{\hat{e}}_i < 0 \\
        -1, & \text{otherwise},
    \end{cases}
\end{equation}
from which the tip-over stability criteria can be defined as:
\begin{equation}
    \theta_i > 0 \implies \min \theta_i > 0, \forall i = 1,2,3,4. \label{eqn:tip_over_stability_criteria} 
\end{equation}
We define the costs associated with the tip-over stability criteria as:
\begin{align}
    \theta_{c,i} &= \max(0,-\theta_i+\epsilon),\\
    \theta_{\Delta c,i} &= (\theta_i - \theta_{i+1})^2,
\end{align}
where $\epsilon$ is a small number. The total stability cost for the vehicle is defined as:
\begin{equation}
    c_s = \sum_i(\theta_{c,i} + w_{\theta} \, \theta_{\Delta c,i}), \label{eqn:stability_cost}
\end{equation}
where $w_{\theta}$ is a scaling factor.


\section{Main Algorithmic Results}
Our trajectory planning pipeline has the following components. First, we fit a functional form to the terrain digital elevation data, which is then used to formulate a differentiable wheel-terrain interaction model. Subsequently, we formulate a trajectory planning problem that leverages the differentiability of the wheel-terrain interaction.

\subsection{Functional Form for the Terrain Model}

Imagine that we are given the digital elevation data, which describes the height $z^j$ of the terrain at any given point $(x^j, y^j)$. For example, these can be obtained online from the point clouds obtained with on-board LiDAR. In this section, our aim is to fit a functional form to this data. That is, we want to obtain analytical relationships of the form:
\begin{equation}
    z^j = f(x^j, y^j). \label{eqn:fft_fn}
\end{equation}

Our approach is built on approximating $f$ in terms of Fourier basis functions. That is, 

\begin{dmath}
    f(x^j,y^j) = \sum_{n=1}^{N} a_n\cos{(\omega_{1,n} x^j +\omega_{2,n} y^j  )} + b_n\sin{(\omega_{3,n} x^j +\omega_{4,n} y^j  )}, 
\end{dmath}

\noindent where $\omega_{1,n}, \omega_{2,n}, \omega_{3,n}, \omega_{4,n}$ are the frequencies of the Fourier basis functions, $N$ is the number of frequencies, and $a_n, b_n$ are the weights associated with each functions. These are obtained by solving the following regression problem

\small
\begin{align}
    \sum_{j= 1}^{M}\Vert f(x^j,y^j)-z^j\Vert_2^2,
\end{align}
\normalsize

\noindent where $M$ is the total number of points in the digital elevation data or point-cloud. Fig.\ref{fig:terrain_patch} shows the fit obtained over circular terrain patches of radius $7m$.

\noindent \textbf{Practical Considerations:} The number of frequencies needed to get a good fit depends on the area of the terrain. Our implementation, which leverages warm-started gradient descent on GPUs, takes on an average of $0.5s$ to fit the Fourier approximation to a circular patch of radius $7m$. During practical implementation, we can adopt a receding horizon approach, where the vehicle can fit the terrain model over a certain distance and then plan a stable trajectory over it. Such an approach would ensure that the terrain fit is called sparingly.



\begin{figure}
\begin{subfigure}{4.2cm}
  \includegraphics[width=1.1\linewidth]{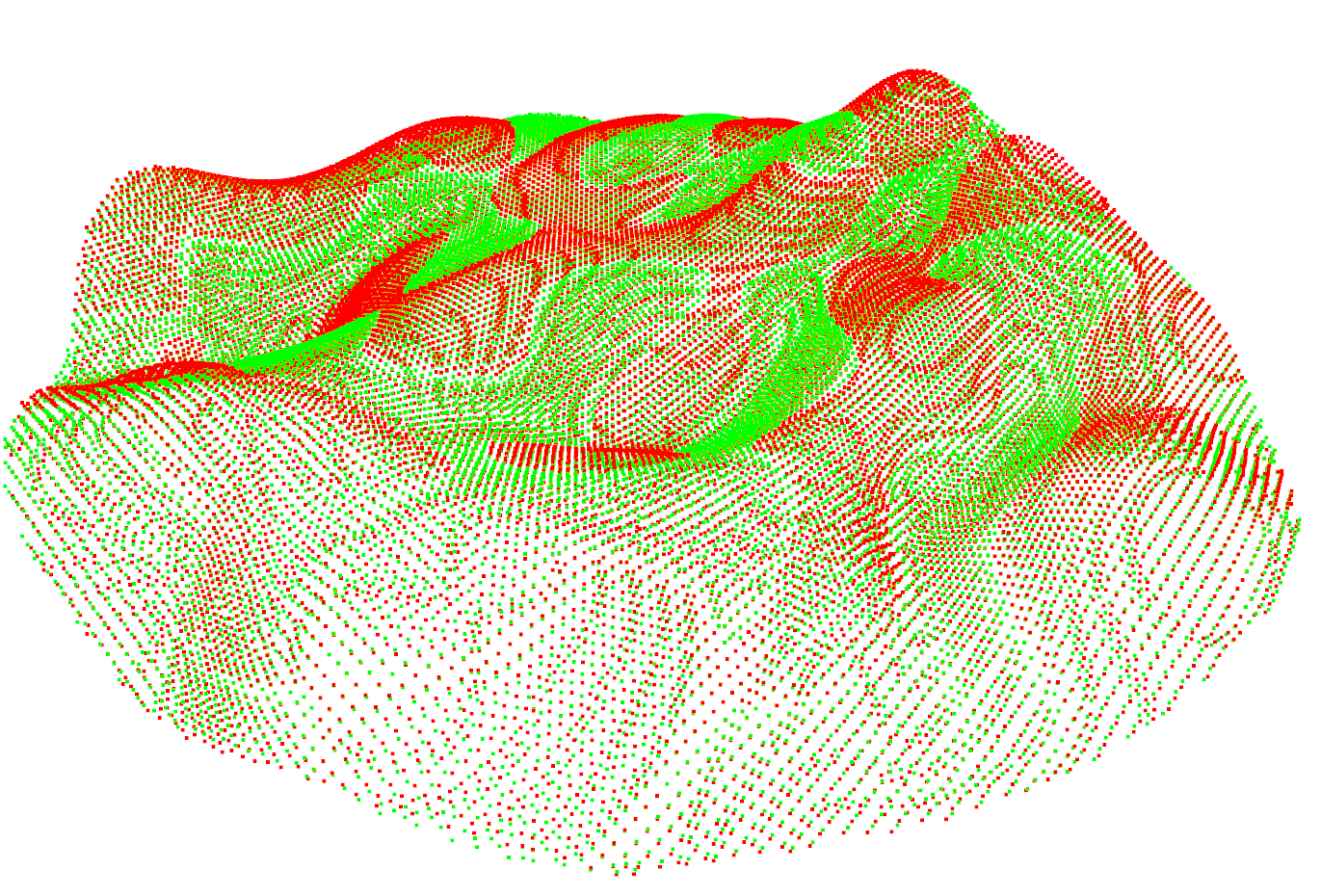}  
  \vspace{-5mm}
  \caption{}
  \label{fig:terrain_patch_6}
  \vspace{-1mm}
\end{subfigure}
\begin{subfigure}{4.2cm}
  \includegraphics[width=1.1\linewidth]{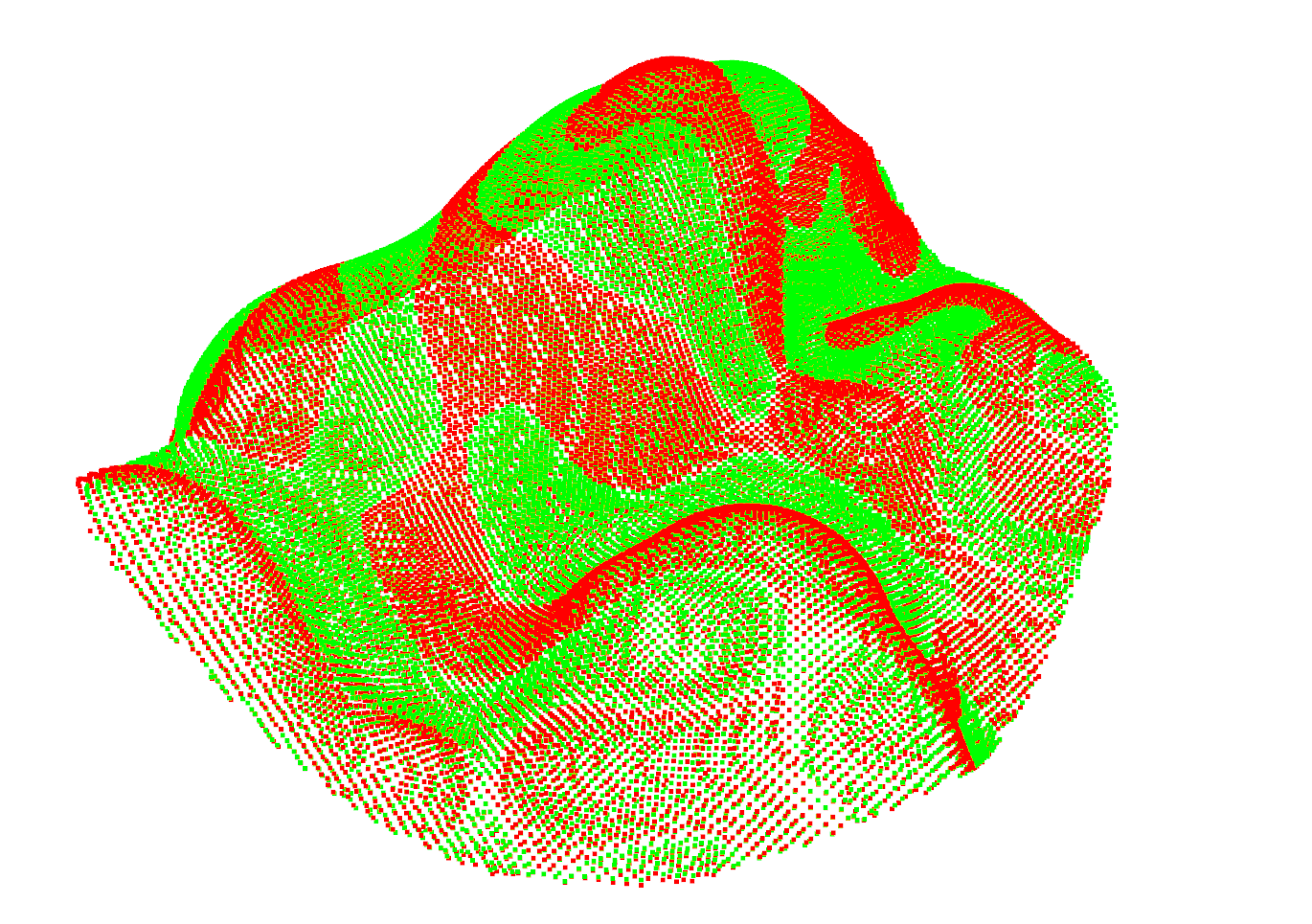}  
  \vspace{-5mm}
  \caption{}
  \label{fig:terrain_patch_7}
  \vspace{-1mm}
\end{subfigure}

\begin{subfigure}{4.2cm}
  \includegraphics[width=1.1\linewidth]{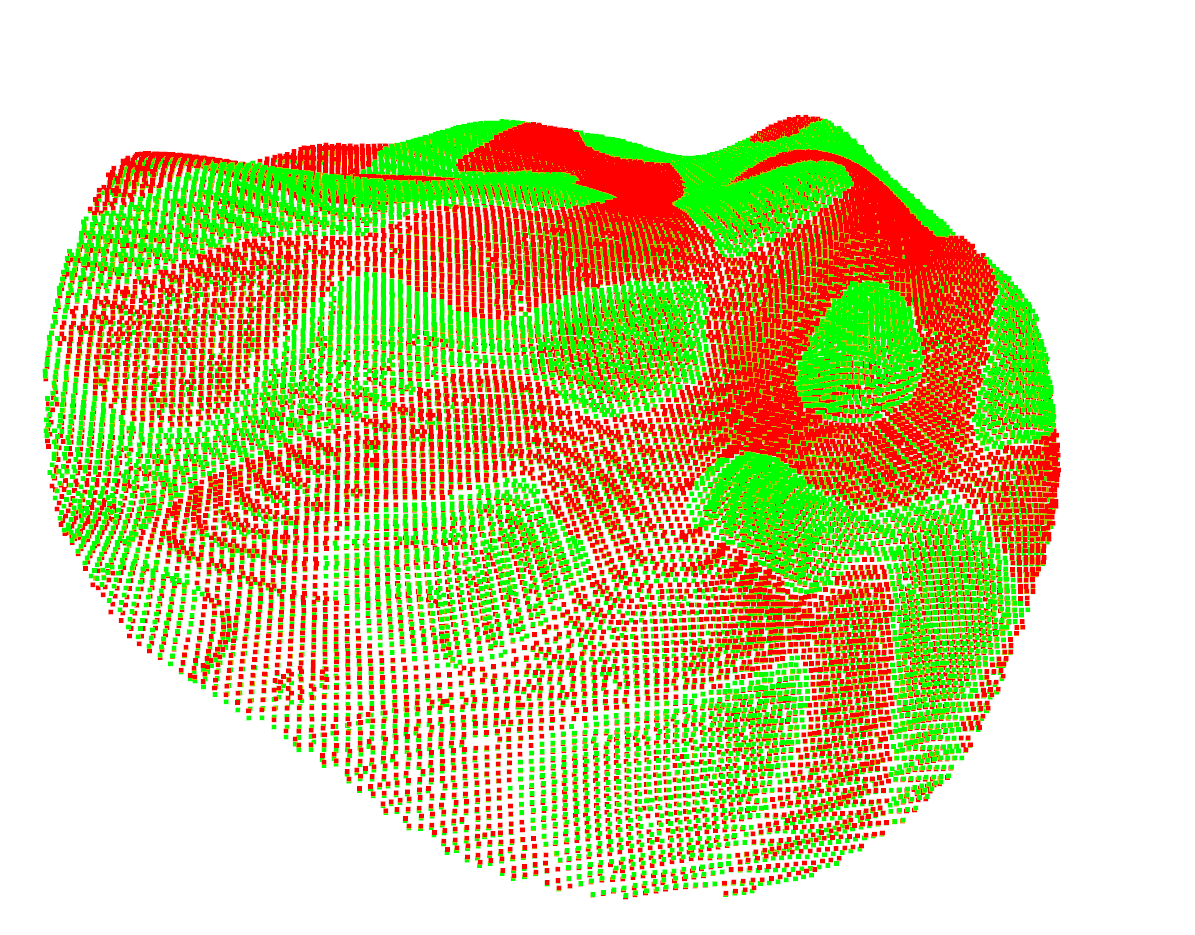}  
  \vspace{-5mm}
  \caption{}
  \label{fig:terrain_patch_8}
  \vspace{-1mm}
\end{subfigure}
\begin{subfigure}{4.2cm}
  \includegraphics[width=1.1\linewidth]{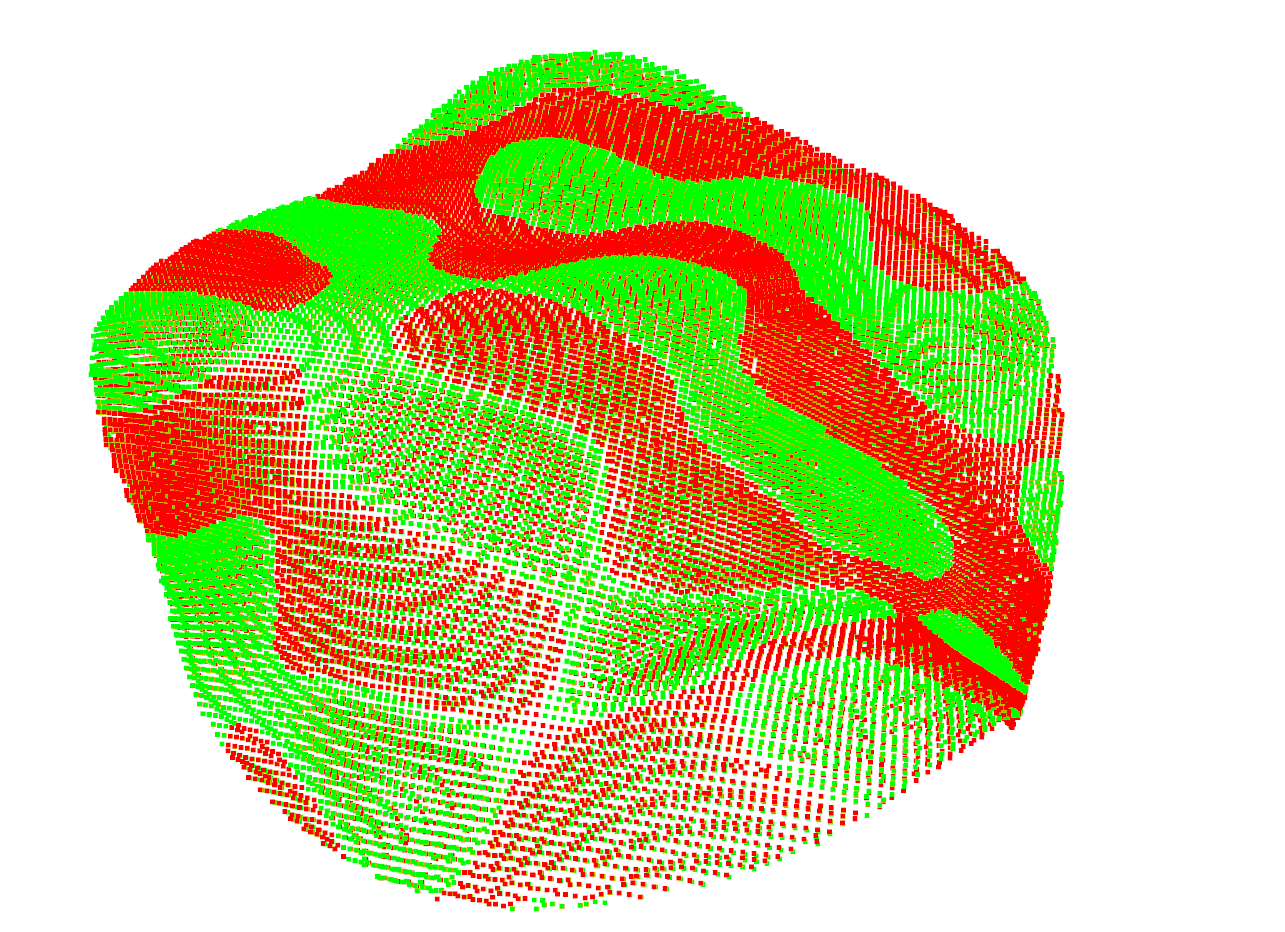}  
  \vspace{-5mm}
  \caption{}
  \label{fig:terrain_patch_9}
  \vspace{-1mm}
\end{subfigure}
\caption{\footnotesize Red points represent the ground truth terrain patch, and green points represent the terrain patch with predicted z coordinates.    }
\label{fig:terrain_patch}
\vspace{-0.8cm}
\end{figure}

\subsection{Differentiable Wheel-Terrain Interaction}


\noindent This section presents the first core result of the paper. Let the yaw plane states of the vehicle at time $k$ can be represented by the vector $\mathbf{x}_k = \begin{bmatrix}
    x_k & y_k & \alpha_k
\end{bmatrix}^T$, where $( x_k, y_k )$ is the position, and $\alpha_k$ is the heading angle of the vehicle. For wheeled vehicles with no-active suspension mechanism, the evolution of a vehicle's yaw plane
configuration, i.e., $\mathbf{x}_k$, can be directly controlled. The rest of the pose variables, i.e., $z_k$ coordinate, roll $\beta_k$, and pitch $\gamma_k$, will be a function of the yaw plane configuration and the terrain geometry. Mathematically, this dependency can be represented in the following manner.

\small
\begin{align}
    z_k = s_1(x_k,y_k,\alpha_k), \nonumber\\
    \beta_k = s_2(x_k,y_k,\alpha_k), \label{eqn:beta_fn}\\ 
    \gamma_k = s_3(x_k,y_k,\alpha_k). \nonumber 
\end{align}
\normalsize

\noindent In this section, we are interested in obtaining the functions $s_i$ that decide the wheel-terrain interaction. To this end, we view the wheeled vehicle on uneven terrain as a parallel manipulator \cite{chakraborty2004kinematics} and derive the so-called loop closure equations. To this end, refer to Fig.~\ref{fig:husky} and imagine that we have a global reference frame $ \{ O \} $ and a reference frame  $ \{ G \} $ that moves along with the vehicle and is attached to its center. The reference frame $ \{ L \} $ is also similar, but its orientation is the same as the vehicle. With these notations in place, the vector describing the wheel-ground contact point  $ \mathbf{p}_{oc, i}$ can be directly obtained from the terrain equation. Alternately, the contact points can also be reached by first following through $\mathbf{p}_{og}$ that identifies the center of the vehicle and then moving along $\mathbf{p}_{gc, i}$. This insight can be encoded into the following equation, which resembles the loop closure equations obtained for parallel manipulators \cite{chakraborty2004kinematics} \cite{singh2016feasible}.


\begin{equation}
    \mathbf{p}_{og} + \mathbf{p}_{gc, i} = \mathbf{p}_{oc, i}, \label{eqn:pos_vec}
\end{equation}

\begin{align}
    \mathbf{p}_{gc,i} &= \mathbf{R} \begin{bmatrix}
        \delta_i  h & r_i w &- (l_i)
    \end{bmatrix}, \forall i = 1,2,3,4, \label{pgci} \\ 
    \delta_i &= \begin{cases}
    1, & i= 1,4, \nonumber\\
    -1, & i = 2,3,
    \end{cases}\\
    r_i &= \frac{2.5-i}{|2.5-i|}, \nonumber \\
    \mathbf{p}_{og} &= \begin{bmatrix}
        x_k & y_k & z_k
    \end{bmatrix}, \label{pog}\\
    \mathbf{p}_{oc,i} &= \begin{bmatrix}
        x_{{c,i}_k} & y_{{c,i}_k} & z_{{c,i}_k} \label{poci}
    \end{bmatrix}.
\end{align}

\noindent The term $\mathbf{p}_{gc, i}$ is expressed as the vector describing the wheel contact point in the vehicle local frame $ \{ L \} $  multiplied with the rotation matrix between the $ \{ O \} $ and $ \{ L \} $ (refer Fig.~\ref{fig:husky}). The variables $r_i$ and $\delta_i$ have been incorporated to ensure the proper sign of $w$ and $h$ corresponding to each vertex of the chassis. $l_i$ are the equivalent leg lengths, including the wheels' radius. The constants $h$ and $w$ are the half-width and half-breadth of the chassis.



We also note that $\mathbf{p}_{oc,i}$ satisfy the terrain equation \eqref{eqn:fft_fn} and thus, it is possible to write:

\begin{equation}
    z_{{c,i}_k} = f(x_{{c,i}_k},y_{{c,i}_k}). \label{eqn:zci}
\end{equation}

We can expand  \eqref{eqn:pos_vec}  using the Euler-angle parametrization for the rotation matrix $\mathbf{R}$. When the resulting expressions are stacked alongside  \eqref{eqn:zci}, we obtain a set of 16 coupled non-linear equations in the following form: 
\begin{equation}
    g_j(\mathbf{x}_k,\mathbf{u}_k) = 0, \quad j=1\dots16,
\end{equation}
where $\mathbf{u}_k(\mathbf{x}_k)=\begin{bmatrix}
    z_k & \beta_k & \gamma_k & x_{{c,i}_k} & y_{{c,i}_k} & z_{{c,i}_k}
\end{bmatrix}^T$.
The pose prediction can now be formulated as a non-linear least squares (NLS) problem:

\begin{tcolorbox}[colback=green!5!white,colframe=green!75!black]
\begin{equation}
 \mathbf{u}_k^*(\mathbf{x}_k) = \argmin_{\mathbf{u}_k} \sum_j g_j(\mathbf{x}_k,\mathbf{u}_k)^2.  \label{eqn:u_k}
\end{equation}
\end{tcolorbox}

\newtheorem{remark}{Remark}\label{rem_1}

\begin{remark}
   The residual associated with the $\mathbf{u}_k^*(\mathbf{x}_k)$ can be used to ascertain whether it is possible for the wheeled vehicle to ensure contact on all four wheels on a particular patch of terrain.
\end{remark}

\noindent \textbf{Implicit Differentiation}
For an efficient solution of the trajectory optimization problem introduced later, it is important to compute the Jacobian of $\mathbf{u}_k^*(\mathbf{x}_k)$ with respect to its input parameter $\mathbf{x}_k$. However, the relationship between  $\mathbf{u}_k^*$ and $\mathbf{x}_k$ does not have an analytical form and thus requires tools from implicit differentiation. The derivation is based on Dini's implicit function theorem \cite{dontchev2009implicit} applied to the
first-order optimality condition. We define the following proposition that has been adapted from \cite{gould2021deep} for our pose prediction NLS.

\begin{proposition}
    Consider the NLS problem \eqref{eqn:u_k}. We can define $\mathbf{H}=\mathrm{D}_{\mathbf{u}_k\mathbf{u}_k}^2 \mathbf{g}(\mathbf{x}_k,\mathbf{u}_k(\mathbf{x}_k)) \in \mathbb{R}^{m \times m}$, where $\mathbf{g(.)}$ is obtained by stacking  $g_j(.)$ and $\mathbf{B}=\mathrm{D}_{\mathbf{x_ku_k}}^2\mathbf{g}(\mathbf{x}_k,\mathbf{u}_k(\mathbf{x}_k)) \in \mathbb{R}^{m \times n}$. Then, the Jacobian of the optimal pose is
    \begin{equation*}
        \mathrm{D}\mathbf{u}^*_k(\mathbf{x}_k) = -\mathbf{H}^{-1}\mathbf{B}, 
    \end{equation*}
   with the assumption that $\mathbf{H}$ is non-singular.
\end{proposition}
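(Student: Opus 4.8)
The plan is to apply the implicit function theorem to the first-order stationarity condition of the unconstrained minimization in \eqref{eqn:u_k}, exactly along the lines of the declarative-network framework of \cite{gould2021deep}. First I would introduce the scalar objective $f(\mathbf{x}_k,\mathbf{u}_k) = \sum_j g_j(\mathbf{x}_k,\mathbf{u}_k)^2$, so that $\mathbf{u}_k^*(\mathbf{x}_k)$ is by definition an interior (unconstrained) minimizer of $f(\mathbf{x}_k,\cdot)$. Because the minimum is interior and $f$ is smooth, the minimizer must be a stationary point, giving the first-order optimality condition
\begin{equation}
    \mathrm{D}_{\mathbf{u}_k} f\big(\mathbf{x}_k, \mathbf{u}_k^*(\mathbf{x}_k)\big) = \mathbf{0}. \label{eqn:stationarity}
\end{equation}
The key observation is that \eqref{eqn:stationarity} holds not just at a single point but identically for every $\mathbf{x}_k$ in a neighborhood of the point of interest; it is therefore an identity in $\mathbf{x}_k$ that can itself be differentiated.

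Next I would differentiate \eqref{eqn:stationarity} totally with respect to $\mathbf{x}_k$, treating $\mathbf{u}_k^*$ as a function of $\mathbf{x}_k$. Applying the total-derivative form of the chain rule from the preliminaries (the vector analogue of \eqref{eqn:chain_rule}) to the composite map $\mathbf{x}_k \mapsto \mathrm{D}_{\mathbf{u}_k} f(\mathbf{x}_k, \mathbf{u}_k^*(\mathbf{x}_k))^T$ yields
\begin{equation}
    \mathrm{D}^2_{\mathbf{x}_k\mathbf{u}_k} f + \mathrm{D}^2_{\mathbf{u}_k\mathbf{u}_k} f \; \mathrm{D}\mathbf{u}_k^*(\mathbf{x}_k) = \mathbf{0},
\end{equation}
where every derivative is evaluated at $(\mathbf{x}_k,\mathbf{u}_k^*(\mathbf{x}_k))$. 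Identifying the Hessian block $\mathbf{H} = \mathrm{D}^2_{\mathbf{u}_k\mathbf{u}_k} f$ and the mixed block $\mathbf{B} = \mathrm{D}^2_{\mathbf{x}_k\mathbf{u}_k} f$, this is the linear system $\mathbf{B} + \mathbf{H}\,\mathrm{D}\mathbf{u}_k^*(\mathbf{x}_k) = \mathbf{0}$. Under the stated assumption that $\mathbf{H}$ is non-singular, I would left-multiply by $\mathbf{H}^{-1}$ to conclude $\mathrm{D}\mathbf{u}_k^*(\mathbf{x}_k) = -\mathbf{H}^{-1}\mathbf{B}$, which is the claimed expression.

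The main obstacle, and the only genuinely nontrivial point, is justifying that $\mathbf{u}_k^*(\mathbf{x}_k)$ is a well-defined, single-valued, and continuously differentiable function of $\mathbf{x}_k$ in the first place, which is what licenses treating \eqref{eqn:stationarity} as a differentiable identity. This is supplied by Dini's implicit function theorem \cite{dontchev2009implicit} applied to the stationarity map $G(\mathbf{x}_k,\mathbf{u}_k) = \mathrm{D}_{\mathbf{u}_k} f(\mathbf{x}_k,\mathbf{u}_k)^T$: the Jacobian of $G$ with respect to $\mathbf{u}_k$ is exactly $\mathbf{H}$, so the non-singularity hypothesis is precisely the regularity condition the theorem requires, and it simultaneously certifies local existence, uniqueness, and smoothness of the implicit solution branch $\mathbf{u}_k^*(\mathbf{x}_k)$ and guarantees the invertibility used in the final step. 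I would also flag the mild bookkeeping point that $\mathbf{H}$ and $\mathbf{B}$ are second derivatives of the scalar sum-of-squares cost $\sum_j g_j^2$ (equivalently $\mathbf{g}^T\mathbf{g}$) rather than of the residual vector $\mathbf{g}$ itself, so that both are genuine matrices of the stated dimensions $\mathbb{R}^{m\times m}$ and $\mathbb{R}^{m\times n}$.
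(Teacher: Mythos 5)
Your proposal is correct and follows essentially the same route as the paper: differentiate the first-order stationarity condition of the sum-of-squares objective with respect to $\mathbf{x}_k$ via the chain rule, then invert the Hessian block, with Dini's implicit function theorem supplying the regularity that the paper also invokes. Your added remark that $\mathbf{H}$ and $\mathbf{B}$ are second derivatives of the scalar cost $\sum_j g_j^2$ rather than of the residual vector $\mathbf{g}$ is a worthwhile clarification of the paper's notation, but it does not change the argument.
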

\begin{proof}
    The first-order optimality condition of our NLS is given by $\mathrm{D}_{\mathbf{u}_k}\mathbf{g}(\mathbf{x}_k,\mathbf{u}_k)=\mathbf{0}_{1 \times m}$. Differentiating both sides of this optimality condition with respect to $\mathbf{x}_k$ provides us the following equation
    \begin{align*}
       \mathbf{0}_{m \times n} &= \mathrm{D}(\mathrm{D}_{\mathbf{u}_k}\mathbf{g}(\mathbf{x}_k,\mathbf{u}_k))^T \\
        &=\mathrm{D}_{\mathbf{x}_k\mathbf{u}_k}^2\mathbf{g}(\mathbf{x}_k,\mathbf{u}_k)+\mathrm{D}_{\mathbf{u}_k\mathbf{u}_k}^2\mathbf{g}(\mathbf{x}_k,\mathbf{u}_k)\mathrm{D}\mathbf{u}_k(\mathbf{x}_k),     
    \end{align*}
    which can be rearranged to
    \begin{equation}
        \mathrm{D}\mathbf{u}_k(\mathbf{x}_k) = -(\mathrm{D}_{\mathbf{u}_k\mathbf{u}_k}^2\mathbf{g}(\mathbf{x}_k,\mathbf{u}_k))^{-1}\mathrm{D}_{\mathbf{x}_k\mathbf{u}_k}^2\mathbf{g}(\mathbf{x}_k,\mathbf{u}_k),
    \end{equation}
    when $\mathrm{D}_{\mathbf{u}_k\mathbf{u}_k}^2\mathbf{g}(\mathbf{x}_k,\mathbf{u}_k)$ is non-singular.
\end{proof}


\begin{figure}[h]
\begin{subfigure}{8cm}
    \includegraphics[width=1.1\linewidth]{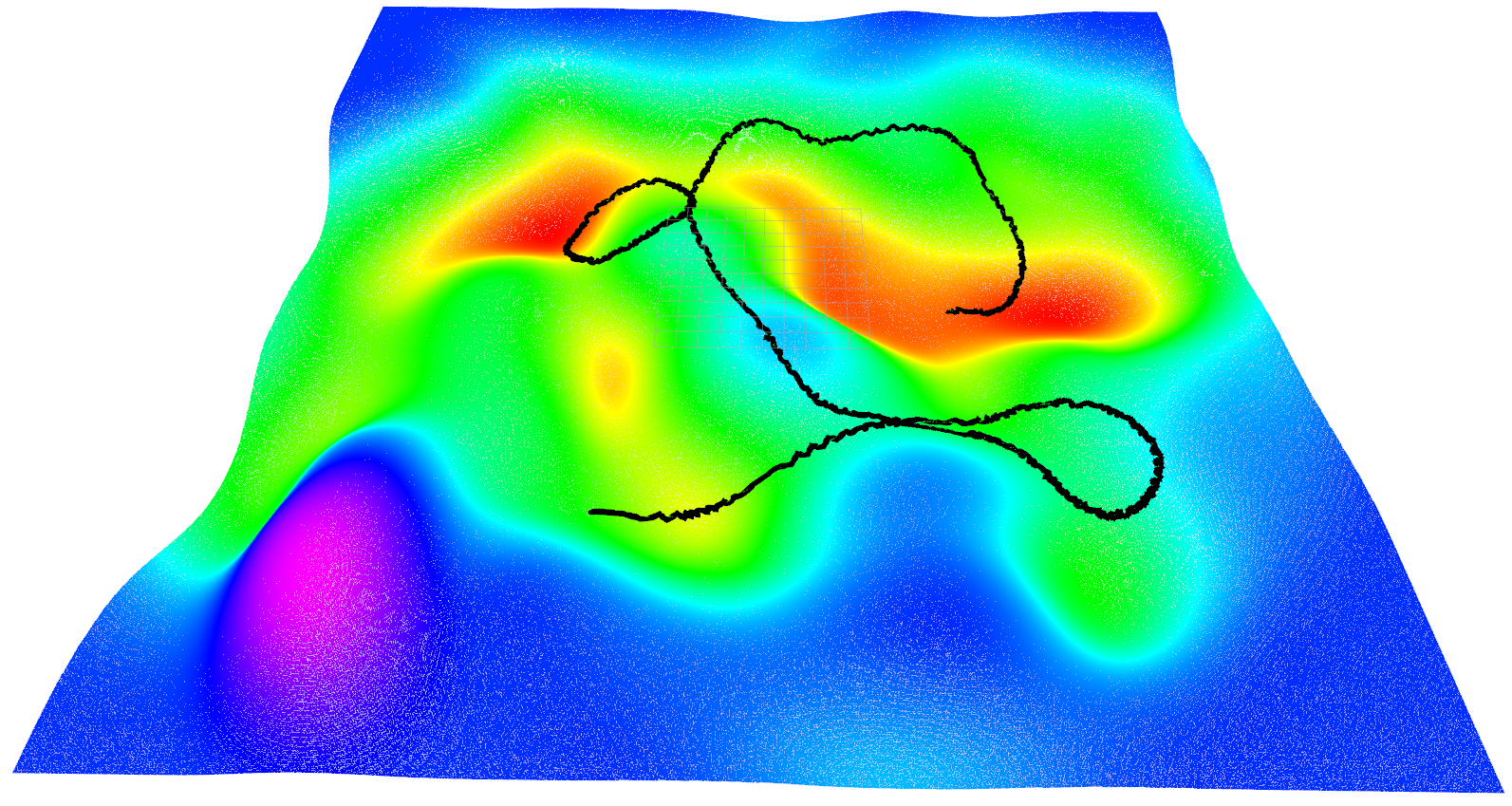}
    \caption{}
    \vspace{-0.9mm}
    \label{fig:manual_traj}
\end{subfigure}
\begin{subfigure}{4.2cm}
    \includegraphics[width=1.1\linewidth]{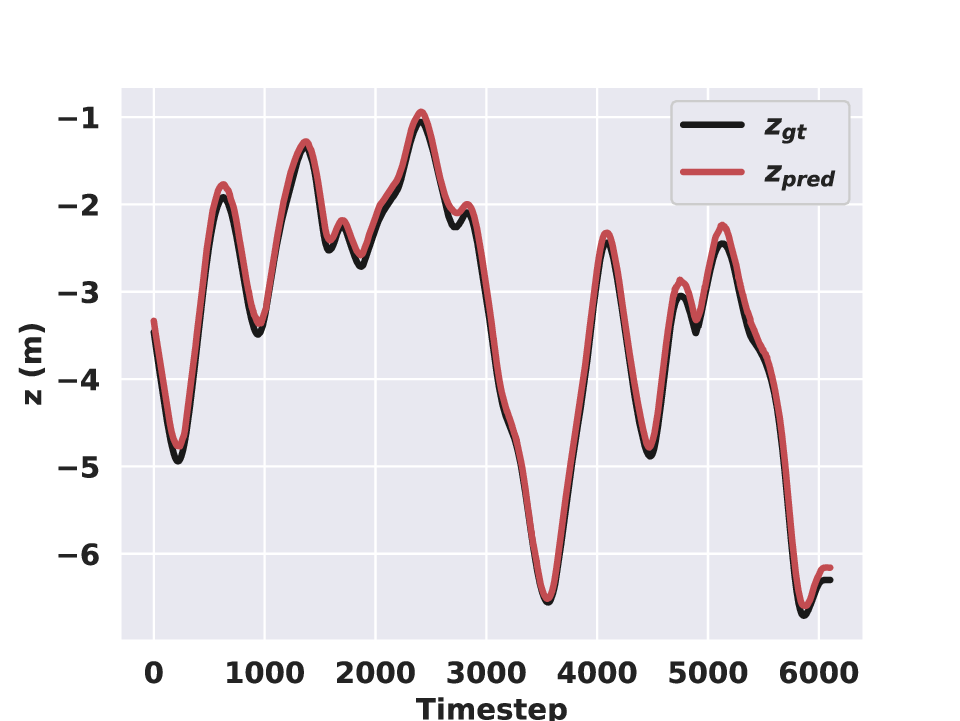}
    \caption{}
    \vspace{-0.9mm}
    \label{fig:z_plot_1}
\end{subfigure}
\begin{subfigure}{4.2cm}
    \includegraphics[width=1.1\linewidth,height=3.5cm]{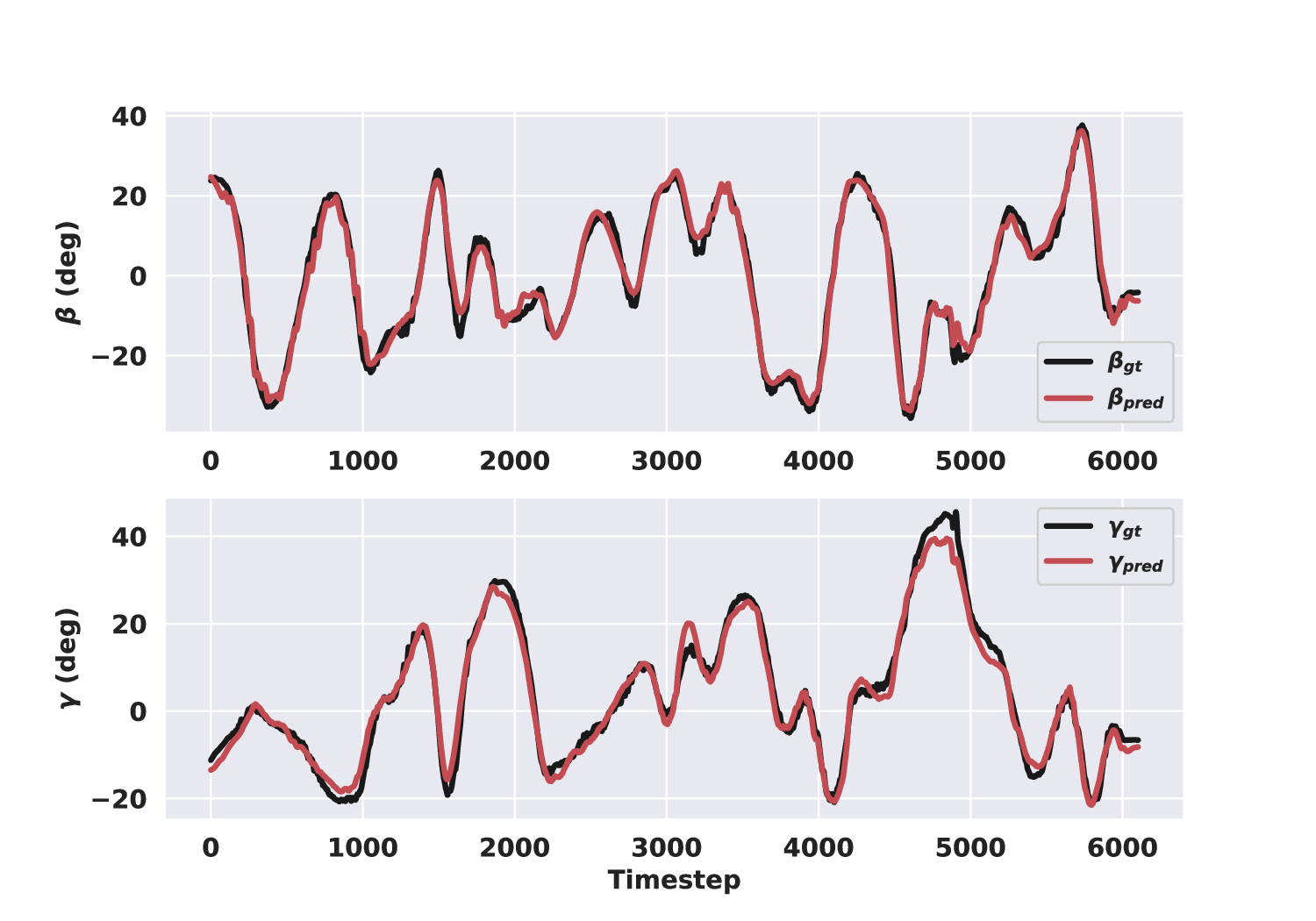}
    \caption{}
    \vspace{-0.9mm}
    \label{fig:beta_gamma_plot_1}
\end{subfigure}
\begin{subfigure}{4.2cm}
    \includegraphics[width=1.1\linewidth]{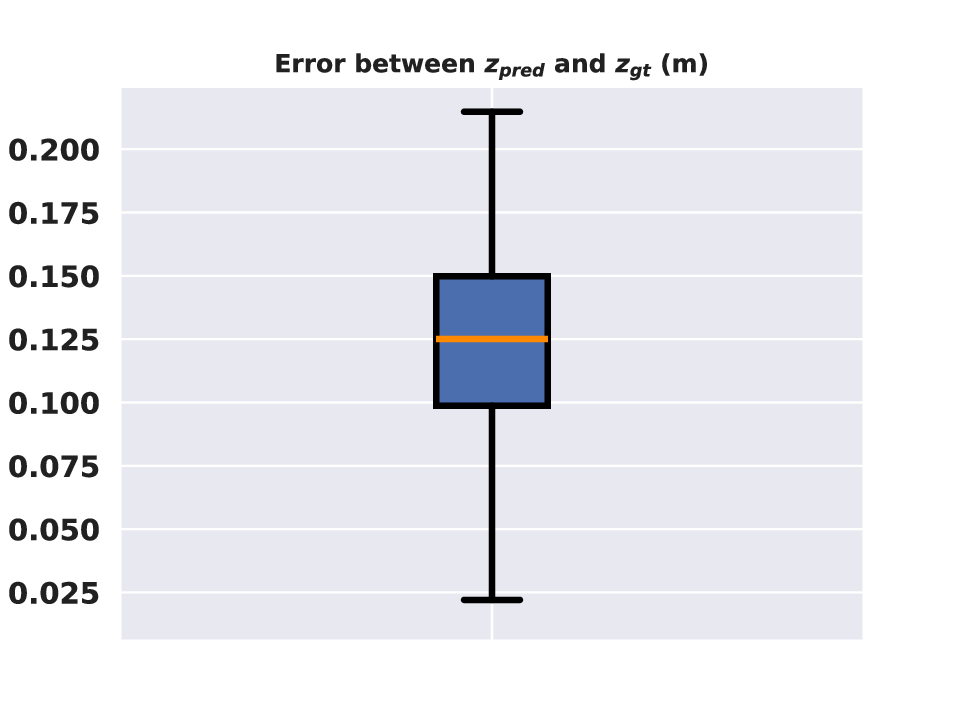}
    \caption{}
    \vspace{-2mm}
    \label{fig:z_box_plot_1}
\end{subfigure}
\begin{subfigure}{4.2cm}
    \includegraphics[width=1.1\linewidth]{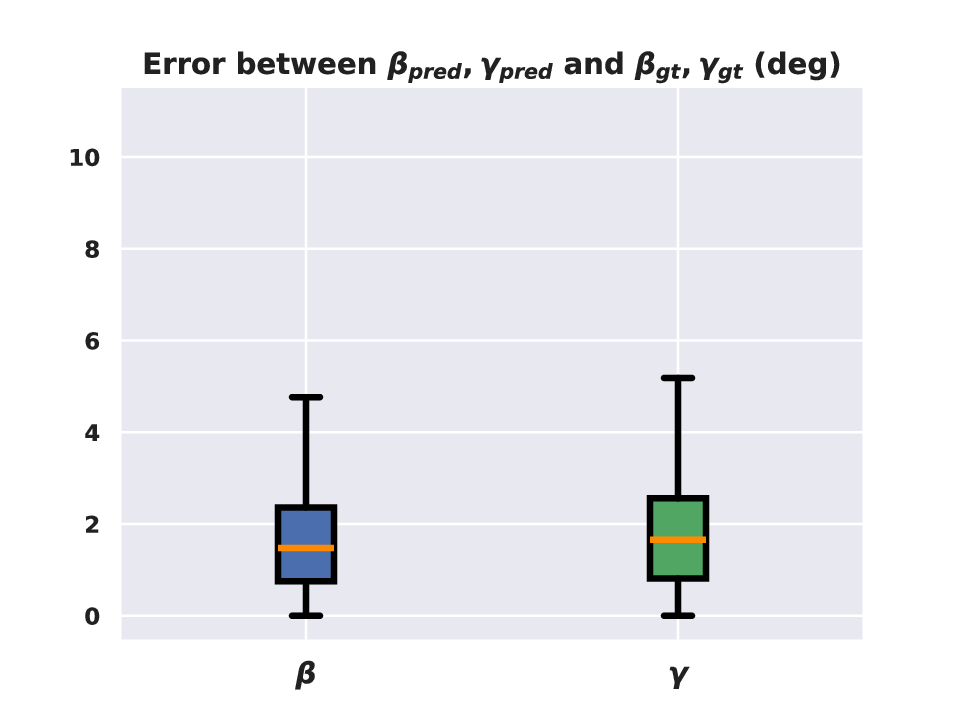}
    \caption{}
    \vspace{-2mm}
    \label{fig:beta_gamma_box_plot_1}
\end{subfigure}
\caption{\footnotesize (a) Trajectory obtained from manually driving Husky on a synthetic terrain in Gazebo. (b) $z$ ground truth and predicted values. (c) $\beta$ and $\gamma$ ground truth vs predicted. (d) Error statistics for $z$. (e) Error statistics for $\beta$ and $\gamma$. }
\end{figure}


\subsection{Bi-Level Optimization Based Trajectory Planning}

\noindent We assume a differentially flat non-holonomic vehicle. Thus, its heading, velocity, and curvature can be uniquely defined by the position-level trajectory. For example, the heading can be obtained as $\arctan(\dot{y}_k, \dot{x}_k)$. With this insight, we propose the following bi-level optimization problem

\small
\begin{subequations}
\begin{align}
    \min \sum_k &\left(c_r(\ddot{x}_k,\ddot{y}_k,\dot{x}_k,\dot{y}_k) + c_s(\mathbf{u}^*_k(\textbf{x}_k)) \right), \label{eqn:cost}\\
    &(x_0, y_0, \dot{x}_0, \dot{y}_0, \ddot{x}_0, \ddot{y}_0) = \mathbf{b}_0, \label{eqn:boundary_cond_initial} \\
    &(x_n, y_n, \dot{x}_n, \dot{y}_n, \ddot{x}_n, \ddot{y}_n) = \mathbf{b}_n, \label{eqn:final_boundary} \\
    &\underline{x} \leq x_k \leq \overline{x}, \\
    &\underline{y} \leq y_k \leq \overline{y}, \label{eqn:inequality}\\
    &\mathbf{u}_k^*(\mathbf{x}_k) = \argmin_{\mathbf{u}_k} \sum_j g_j(\mathbf{x}_k,\mathbf{u}_k)^2. \label{nls_biopt}
\end{align}
\begin{align}
    &c_a (\ddot{x}_k,\ddot{y}_k) = \ddot{x}_k^2+\ddot{y}_k^2, \\
    &c_c(\ddot{x}_k,\ddot{y}_k,\dot{x}_k,\dot{y}_k) = \ddot{y}\dot{x}-\ddot{x}\dot{y}/(\dot{x}^2+\dot{y}^2+\epsilon)^{\frac{3}{2}},\\
    &c_r(.)=c_a(.)+c_c(.).
\end{align}
\end{subequations}

\normalsize
\noindent The first term ($c_r(.)$) in the cost function  \eqref{eqn:cost} is the kinematics cost that ensures smoothness in the planned trajectory by penalizing high accelerations and sharp turns in the trajectory. The last term ($c_s(.)$) is the tip-over stability cost defined in \eqref{eqn:stability_cost}. Equality constraints \eqref{eqn:boundary_cond_initial} and \eqref{eqn:final_boundary} ensure that the planned trajectory satisfies the initial and final boundary conditions. Consequently, $\mathbf{b}_0,\mathbf{b}_n$ are the stacked vectors of initial and final positions, velocities, and accelerations. The constants $\underline{x},\underline{y},\overline{x},\overline{y}$ are the minimum and maximum bounds for $x_k,y_k$. The position level is critical for restricting the trajectory to lie within the patch that is currently observable by the vehicle's sensors.

Using the trajectory parametrization given in Sec~\ref{sec:traj_parametrization}, the Bi-level optimization problem can be written in a compact form as given below.

\begin{subequations}
\begin{align}
    \min_{\boldsymbol{\xi}} c_{r} (\boldsymbol{\xi})+c_s (\mathbf{u}^*(\boldsymbol{\xi})), \label{upper_cost}  \\
    \mathbf{u}(\boldsymbol{\xi}) = \min_{\mathbf{u}} \left\lVert \mathbf{g}(\boldsymbol{\xi}, \mathbf{u}) \right\rVert_{2}^2, \label{lower_cost}  \\
    \textbf{A}_{eq}\boldsymbol{\xi} = \textbf{b}_{eq},  \quad \textbf{A}\boldsymbol{\xi} \leq  \textbf{b}. \label{lower_eq} 
\end{align}
\end{subequations}

\noindent where $\boldsymbol{\xi} = [\mathbf{c}_x, \mathbf{c}_y]$ and $\mathbf{u}^*$ is formed by stacking $\mathbf{u}_k^*$ at different instants of time. The matrices $\mathbf{A}_{eq},\mathbf{A}$ and the vectors $\mathbf{b}_{eq},\mathbf{b}$ comes from \eqref{eqn:boundary_cond_initial}-\eqref{eqn:inequality}.


\subsection{Projected gradient descent}
\noindent The optimal value for the parameter $\boldsymbol{\xi}$ is obtained through projected gradient descent. The gradient descent update rule is given as
\begin{equation}
    ^{t+1}\overline{\boldsymbol{\xi}} =\, ^{t}\boldsymbol{\xi} - \eta\left(\nabla_{\boldsymbol{\xi}}c_r({^t}\boldsymbol{\xi})+ \mathrm{D}c_{s}\left(\mathbf{u}^{*}\left({^t}\boldsymbol{\xi}\right)\right)\right),
\end{equation}
where $\eta$ is the learning rate. The parameter $\boldsymbol{\xi}$ is projected onto the constraint set using the following Quadratic Programming problem at each iteration $t$ of the gradient descent.
\begin{align}
    ^{t+1}\boldsymbol{\xi} =&  \arg \min_{\boldsymbol{\xi}}\frac{1}{2}\left\lVert ^{t+1}\overline{\boldsymbol{\xi}} - ^{t+1}\boldsymbol{\xi} \right\rVert_2^2,\\ \label{eqn:projection}
    &\textbf{A}_{eq} \, ^{t+1}\boldsymbol{\xi} = \textbf{b}_{eq},\\
    &\textbf{A} \, ^{t+1}\boldsymbol{\xi} \leq  \textbf{b}. \label{eqn:proj_ineq}
\end{align}

\noindent The term $\mathrm{D}c_{s}\left(\mathbf{u}^{*}\left({^t}\boldsymbol{\xi}\right)\right)$ will be computed through implicit differentiation discussed in the previous section. However, since the implicit gradient is computed for a scalar cost $c_s$, we never actually have to explicitly form the Jacobian of $\textbf{u}^*$ with respect to $\boldsymbol{\xi}$. Rather, we just need the Jacobian-vector product that can be efficiently computed through existing autodifferentiaton libraries like JAX \cite{jax}. 

\begin{figure*}[h]
\begin{subfigure}{7cm}
  \includegraphics[width=\linewidth]{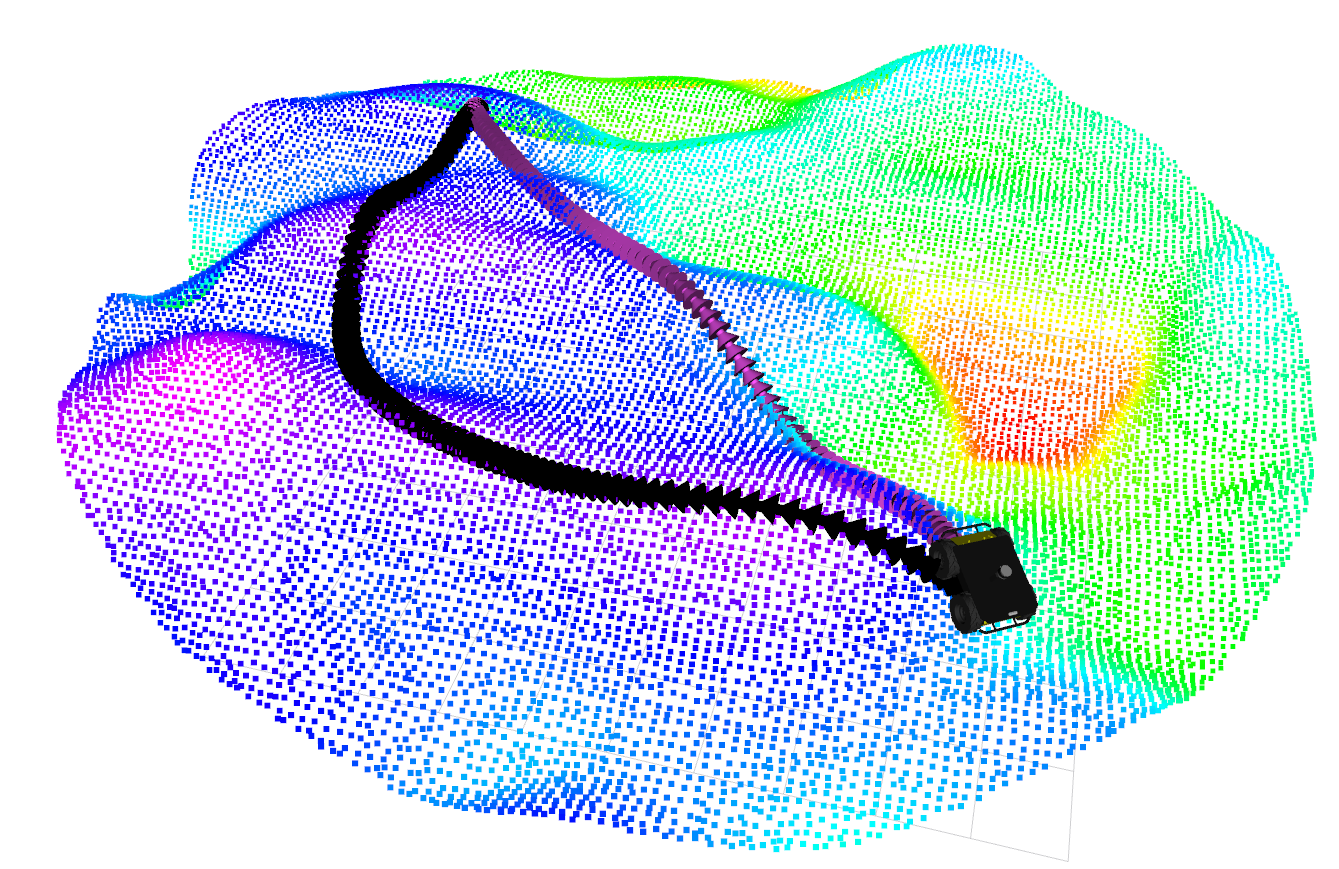}  
  \vspace{-4mm}
  \caption{\scriptsize Example 1}
  \label{fig:traj_terrain_2}
  \vspace{-1mm}
\end{subfigure}\hspace{20mm}
\begin{subfigure}{7cm}
  \includegraphics[width=\linewidth]{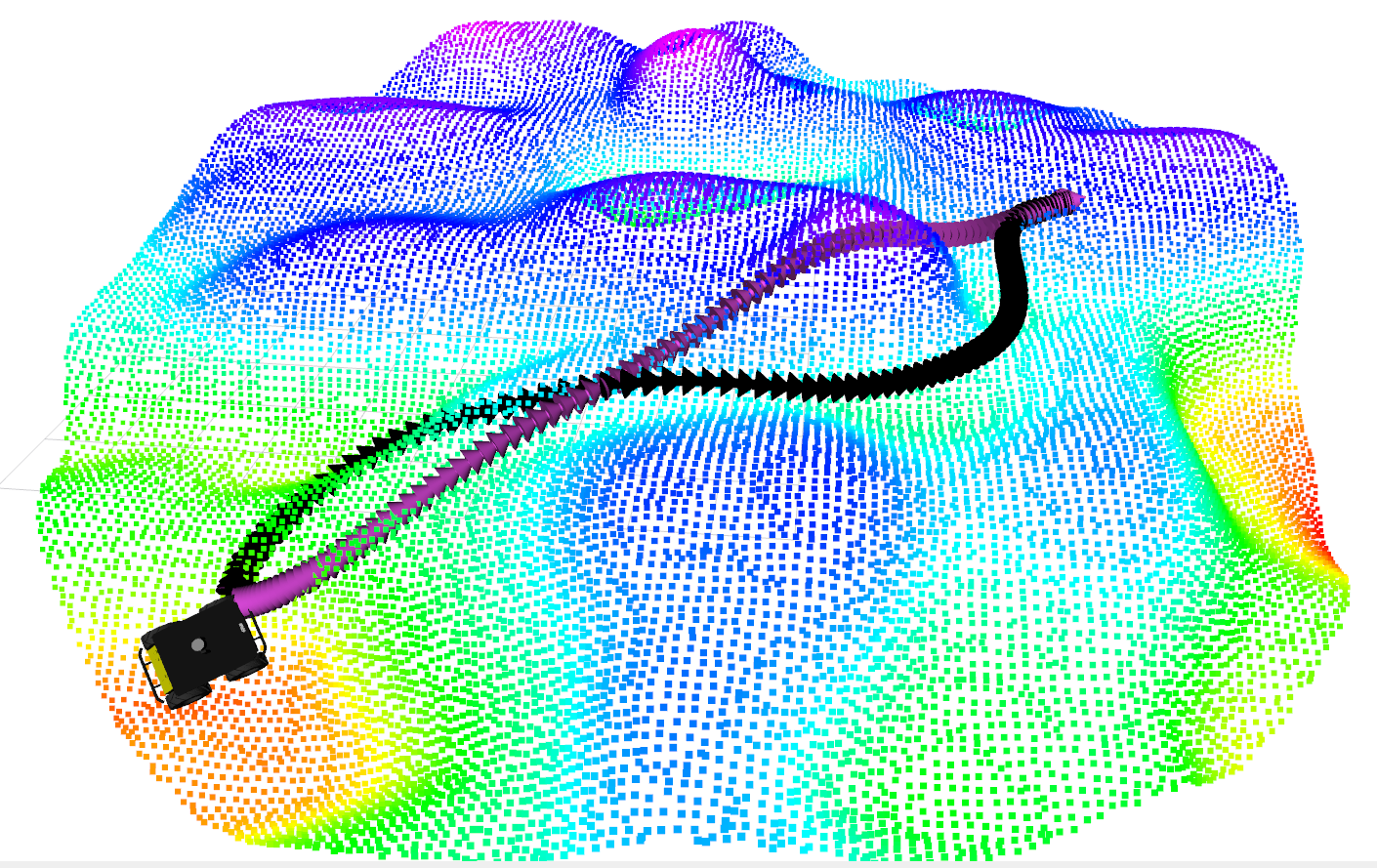}  
  \vspace{-5mm}
  \caption{\scriptsize Example 2}
  \label{fig:traj_terrain_6}
  \vspace{-1mm}
\end{subfigure}

\begin{subfigure}{7cm}
  \includegraphics[width=\linewidth]{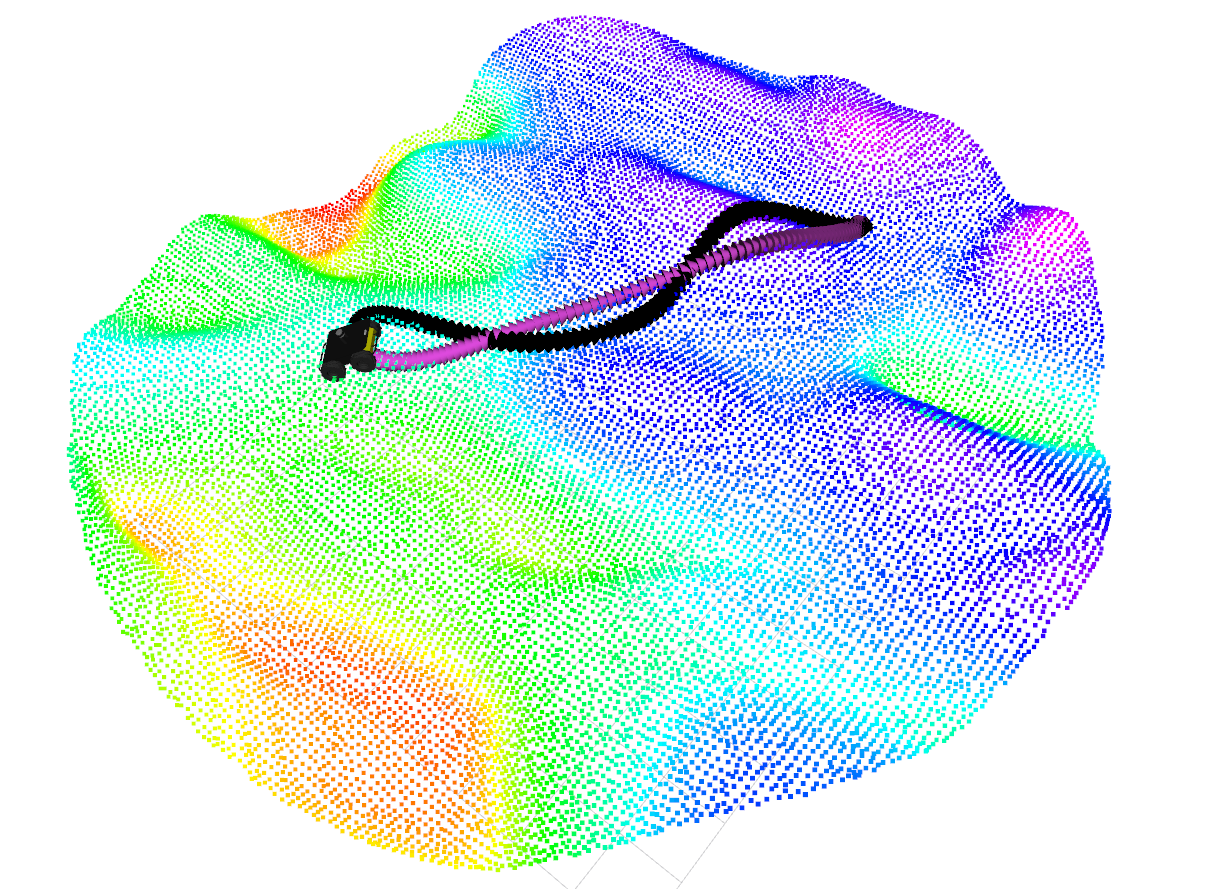}  
  \vspace{-5mm}
  \caption{\scriptsize Example 3}
  \label{fig:traj_terrain_3}
  \vspace{-1mm}
\end{subfigure}\hspace{20mm}
\begin{subfigure}{7cm}
  \includegraphics[width=\linewidth]{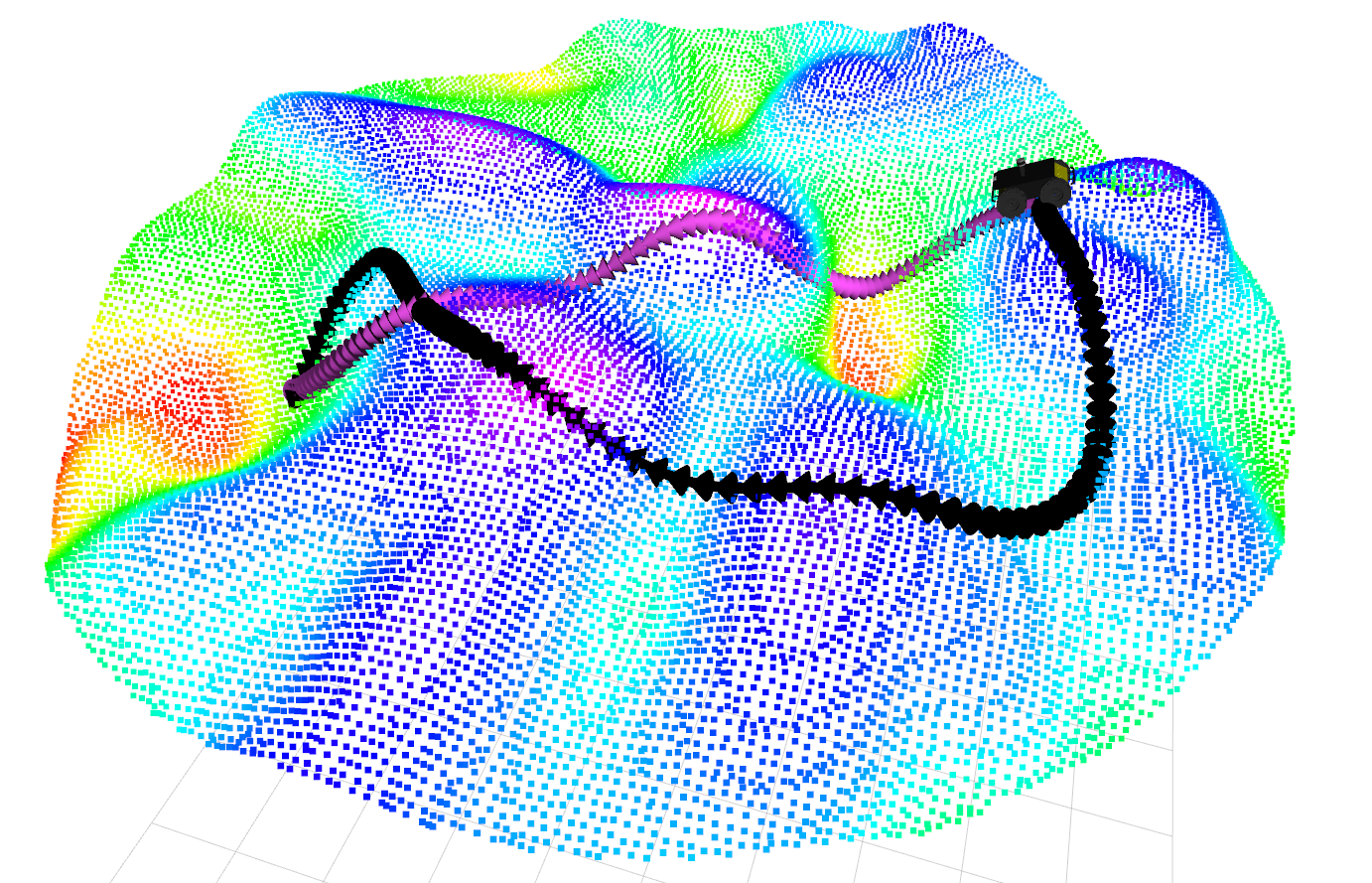}  
  \vspace{-5mm}
  \caption{\scriptsize Example 4}
  \label{fig:traj_terrain_7}
  \vspace{-1mm}
\end{subfigure}
\caption{\footnotesize Trajectories with and without stability cost. Black colour represents trajectory with stability cost and pink colour without the stability cost.    }
\label{fig:stability_comp_traj}
\end{figure*}
\section{Results}
In this section, we validate our proposed wheel-terrain interaction and pose predictor along with the bi-level optimizer through extensive simulations. We mainly focus on answering the following research questions with the help of qualitative and quantitative (statistical) results.

\begin{itemize}
    \item \textbf{Q1:} How well does our NLS based pose predictor fare vis-a-vis high-fidelity physics simulator Gazebo \cite{koenig2004design}?
    \item \textbf{Q2:} How does the stability cost affect the trajectory taken by the vehicle?
    \item \textbf{Q3:} How does our gradient descent approach compare to the state-of-the-art Cross Entropy Method (CEM) \cite{rubinstein1999cross} sampling based planner?
\end{itemize}

\begin{figure*}[!t]
\begin{subfigure}{4.2cm}
  \includegraphics[width=\linewidth]{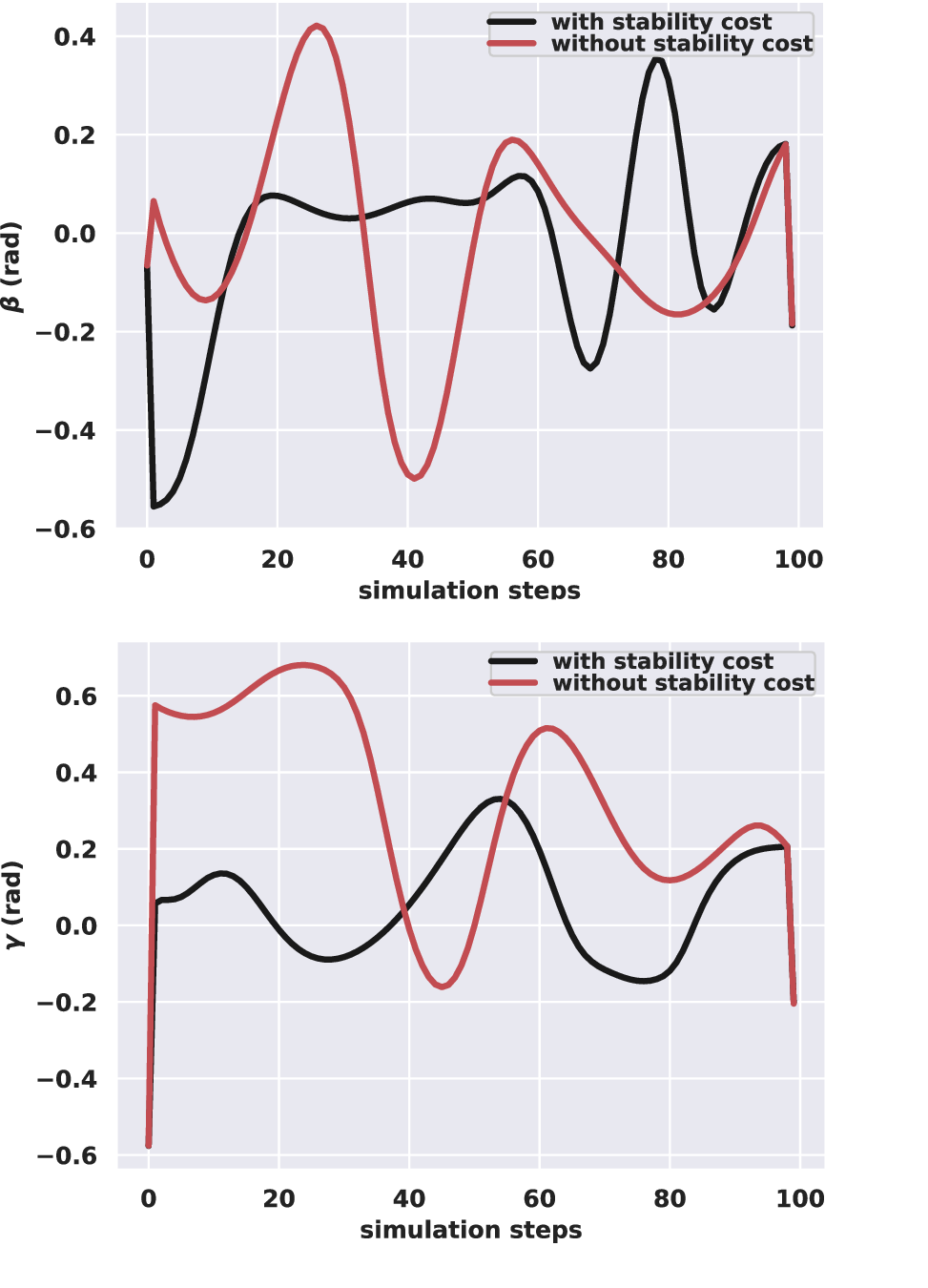}  
  \vspace{-5mm}
  \caption{\scriptsize Example 1}
  \label{fig:beta_gamma_2}
  \vspace{-1mm}
\end{subfigure}
\begin{subfigure}{4.2cm}
  \includegraphics[width=\linewidth]{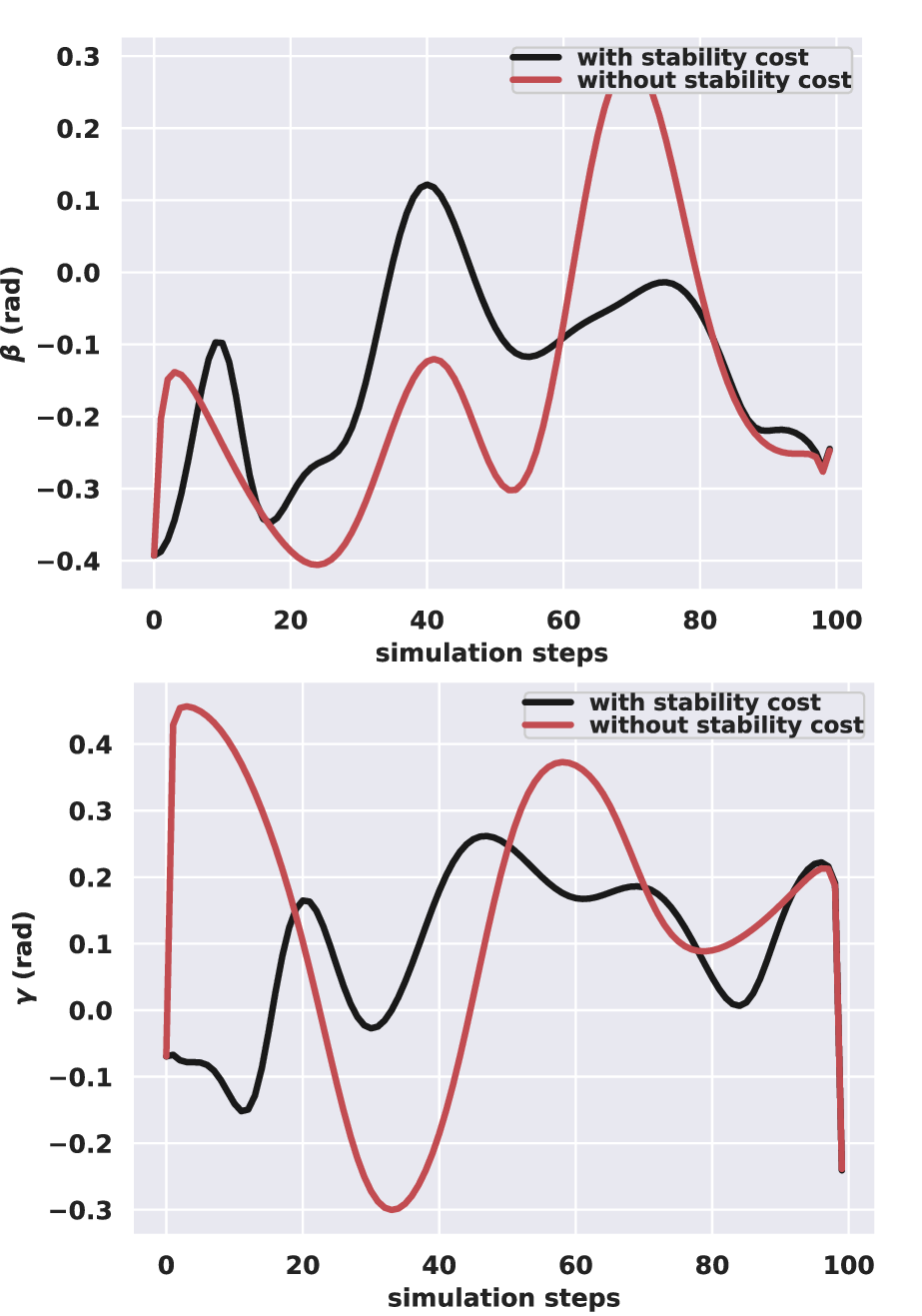}  
  \vspace{-5mm}
  \caption{\scriptsize Example 2}
  \label{fig:beta_gamma_6}
  \vspace{-1mm}
\end{subfigure}
\begin{subfigure}{4.3cm}
  \includegraphics[width=1.03\linewidth]{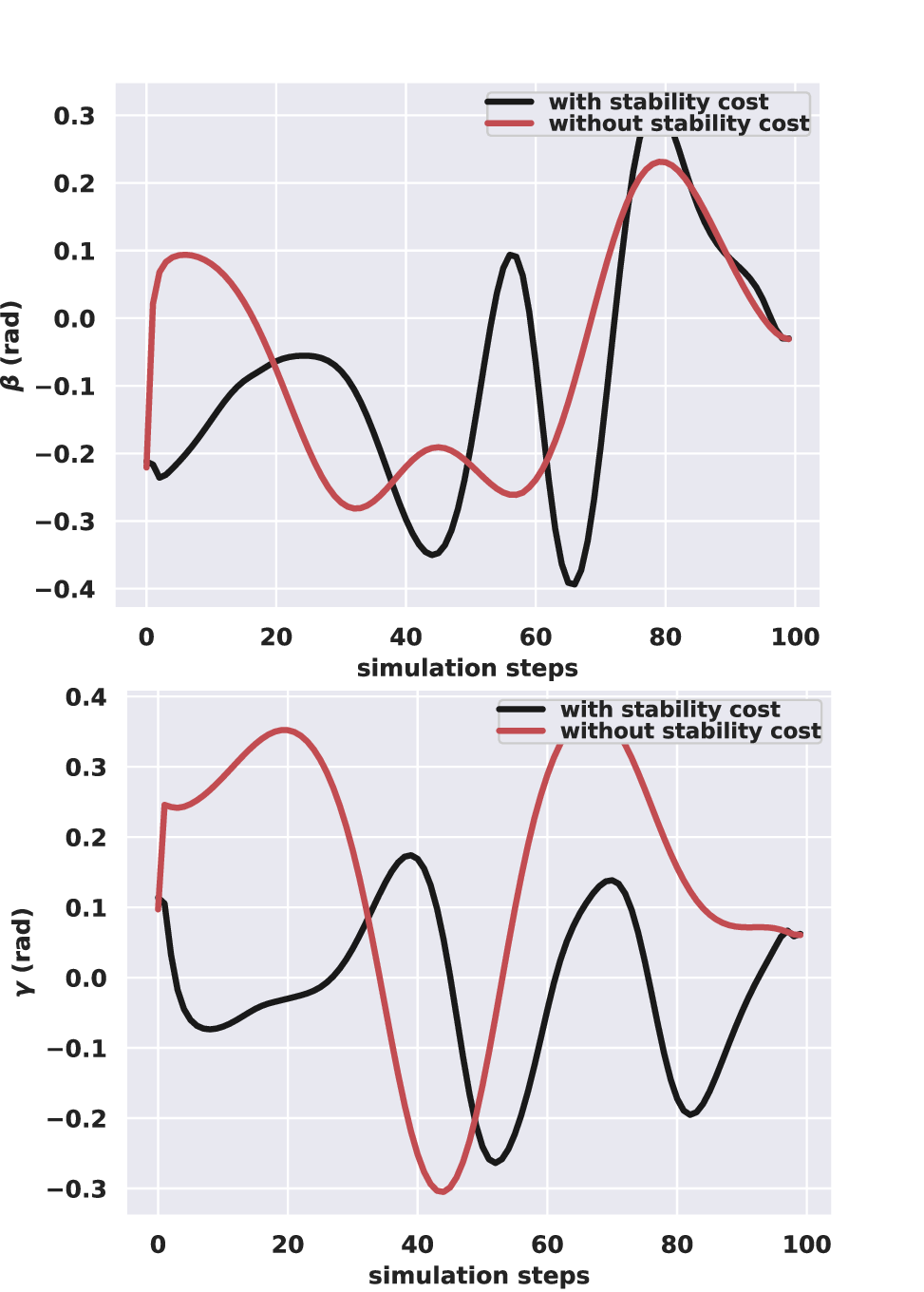}  
  \vspace{-5mm}
  \caption{\scriptsize Example 3}
  \label{fig:beta_gamma_3}
  \vspace{-1mm}
\end{subfigure}
\begin{subfigure}{4.3cm}
  \includegraphics[width=1.03\linewidth]{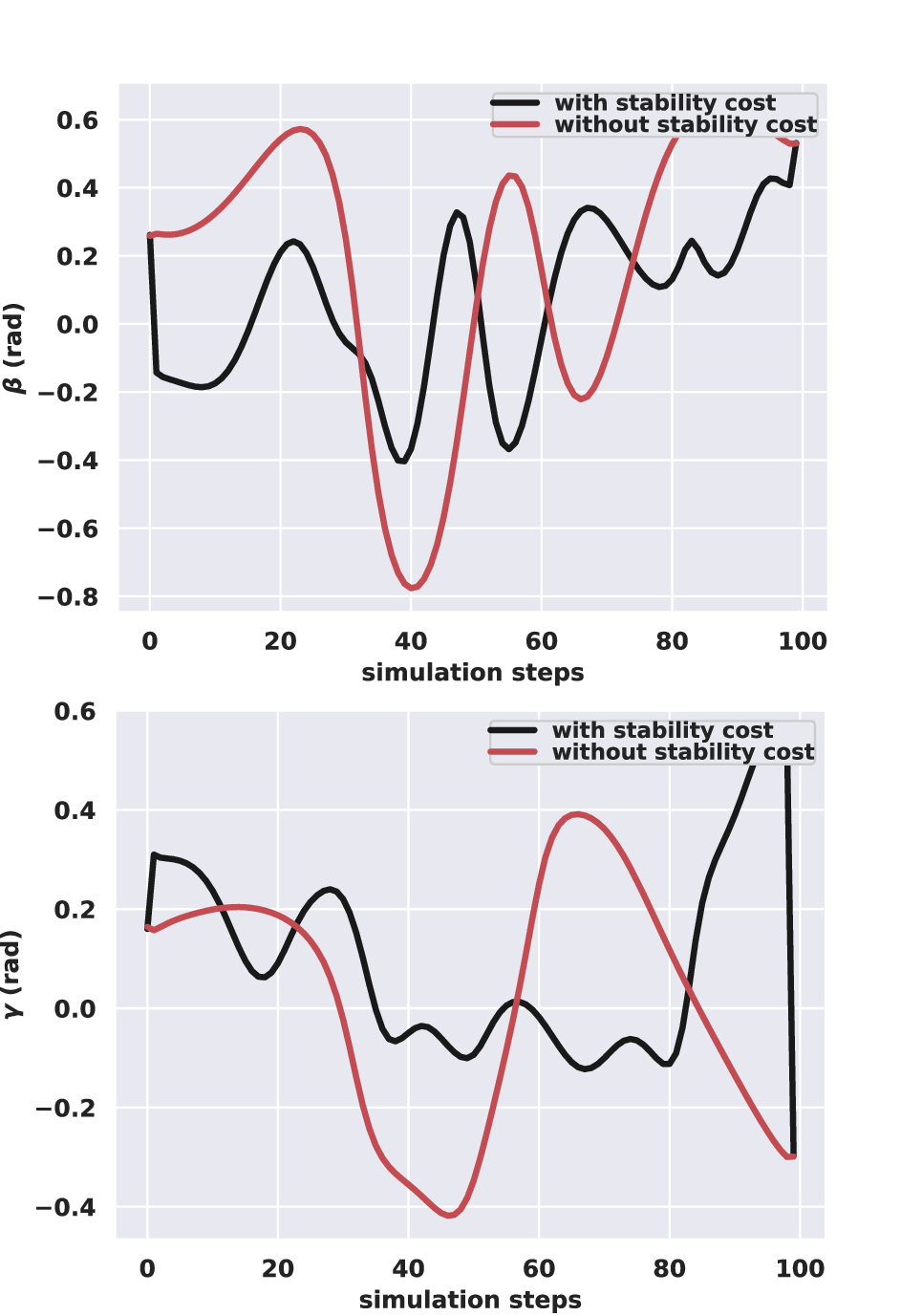}  
  \vspace{-5mm}
  \caption{\scriptsize Example 4}
  \label{fig:beta_gamma_7}
  \vspace{-1mm}
\end{subfigure}
\caption{\footnotesize Roll and pitch angle variations with and without the stability cost.}
\label{fig:stability_comp_beta_gamma}
\vspace{-5mm}
\end{figure*}

\subsection{Implementation Details}
\noindent The trajectory planner comprising of the bi-level optimization \eqref{upper_cost}-\eqref{lower_eq} and projection \eqref{eqn:projection}-\eqref{eqn:proj_ineq} is implemented in Python using the JAX \cite{jax} library for GPU-accelerated computing. The matrix $\textbf{W}$ in \eqref{param} is constructed from a $10^{th}$ order polynomial. Our simulation pipeline consists of open-loop analyses carried out with stand-alone Python code, and the closed-loop simulations were performed on the Gazebo simulator \cite{koenig2004design}. The terrains were created using Blender \cite{blender} and were converted to the required formats for the Python and Gazebo simulations. All simulations were conducted for $100$ time-steps with a sampling time of $0.2s$ on a computer with Intel i9 CPU and Nvidea RTX 3080 GPU. For modeling the terrain at each instant of time, we take a patch of radius $7m$ centered at the vehicle position. The number of Fourier frequencies $N$ in the terrain model was taken as $100$ to balance the trade-off between computation time and accuracy. 

\subsection{Validation using the Gazebo simulator}

\noindent In this section, we analyze the accuracy of our pose predictor by comparing the ground truth data obtained from Gazebo with the values obtained by solving the NLS problem \eqref{eqn:u_k} with the help of \eqref{eqn:fft_fn}. To this end, we drive the Husky robot in Gazebo on synthetic terrain models and collect the odometry data, as shown in Fig.~\ref{fig:manual_traj}. We then compare the ground truth values $z_{gt}$, $\beta_{gt}$, and $\gamma_{gt}$ from Gazebo with the predicted $z_{pred}$, $\beta_{pred}$, and $\gamma_{pred}$ from our pose optimizer. Several runs of the robot on different terrains were performed, and some qualitative results are shown in Figures \ref{fig:z_plot_1} and \ref{fig:beta_gamma_plot_1}. We can see that the predicted pose values closely match the ground truth values. The statistical results presented in Figures~\ref{fig:z_box_plot_1} and \ref{fig:beta_gamma_box_plot_1} show the minimum, maximum, and median error obtained across several runs. It can be seen that our NLS pose optimizer is highly accurate in predicting the pose of the robot on different terrains, and the error values are minimal.

\subsection{Validation of the Safe Planning on Uneven Terrain}
\noindent In this subsection, we evaluate the performance of our trajectory optimizer by considering the tip-over stability metric as an indicator of safe trajectories. Fig~\ref{fig:stability_comp_traj} shows four example trajectories generated by the planner with and without the stability cost $c_s(.)$ defined in \eqref{eqn:stability_cost}. From the figures, we can see that the trajectory computed by the planner with stability cost closely follows the terrain gradients, while the one without the stability cost cuts across hills and valleys in the terrain. The former behavior results in a safer and more stable trajectory toward the goal point, whereas the latter is prone to tipping along the way. The variations in the roll angle $\beta$ and the pitch angle $\gamma$ for the trajectories are plotted in Fig.~\ref{fig:stability_comp_beta_gamma}. It can be seen that the magnitude of the angles is higher for the trajectory without the stability cost. Also, there are sharp variations in the angles, which means that the vehicle is traversing through a very uneven terrain. Sudden changes in the vehicle's orientation may lead to the loss of stability. With the stability cost in effect, the magnitude and variations in the angles become lower, and the vehicle becomes more stable.

Next, we present a statistical result in Fig.~\ref{fig:stability_tip_comp}. Here, we also highlight the significance of the weighting term $w_{\theta}$ in the stability cost \ref{eqn:stability_cost}. Several trajectories were simulated on different terrains, and the average worst-case tip-over angles were recorded. It can be seen that for the trajectories without stability cost, the value is lower. In the case of tip-over angle, higher positive values are preferable, as shown in \eqref{eqn:tip_over_stability_criteria}. The values with stability cost are far higher than those without stability cost, which confirms that the trajectories taken by the vehicle with stability cost have a higher safety margin. It can also be seen that as the weight $w_{\theta}$ increased from 0.05 to 0.2, the values of the angles increased, which in turn increased the tip-over stability margin.

\subsection{Comparison with CEM sampling-based planner}
\noindent This section investigates the importance of differentiability of the wheel-terrain interaction/pose predictor model. If the implicit gradients are not available, then one can use a black-box sampling-based optimizer such as CEM. However, it will require an order of magnitude larger calls of the NLS pose predictor. In fact, it may not even be possible to solve our bi-level optimizer through a sampling-based approach on resource-constrained hardware. Nevertheless, in this section, our aim is to show that our gradient-based approach is competitive with a more computationally demanding approach. The results are summarized in  Fig~\ref{fig:stability_tip_comp_cem_grad}, which shows the statistical results of the average worst-case tip-over angle obtained with our gradient-based approach and CEM. Two different batch sizes (100 and 20) were used for CEM.  The results show that our gradient descent-based planner performs as well as a CEM planner with batch size 100 and is slightly better than a CEM with batch size 20. The average computation time for our approach was $0.42s$, whereas the CEM with batch sizes 100 and 20 took $2.12s$ and $1.03s$, respectively. It is also to be noted that the CEM planners are memory-hungry; thus, implementation on onboard computers might be difficult, if at all possible. 



\begin{figure}
\begin{subfigure}{4.2cm}
  \includegraphics[width=\linewidth]{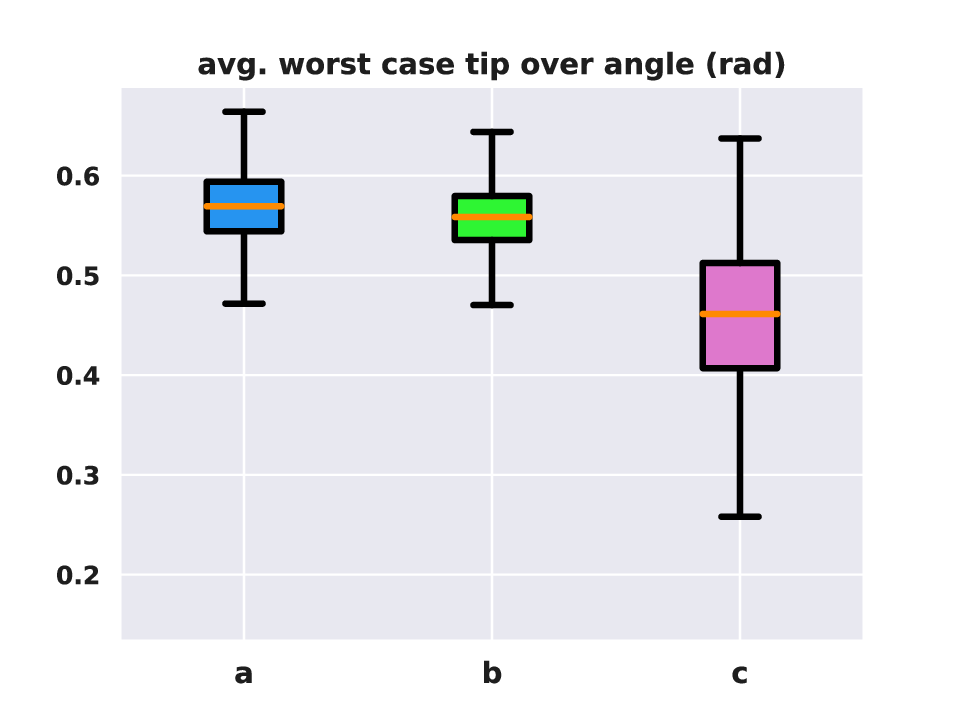}  
  \vspace{-5mm}
  \caption{\scriptsize a: with $c_s$, $w_{\theta}=0.2$, b: with $c_s$, $w_{\theta}=0.05$, c: without $c_s$}
  \label{fig:stability_tip_comp}
  \vspace{-1mm}
\end{subfigure}
\begin{subfigure}{4.2cm}
  \includegraphics[width=\linewidth]{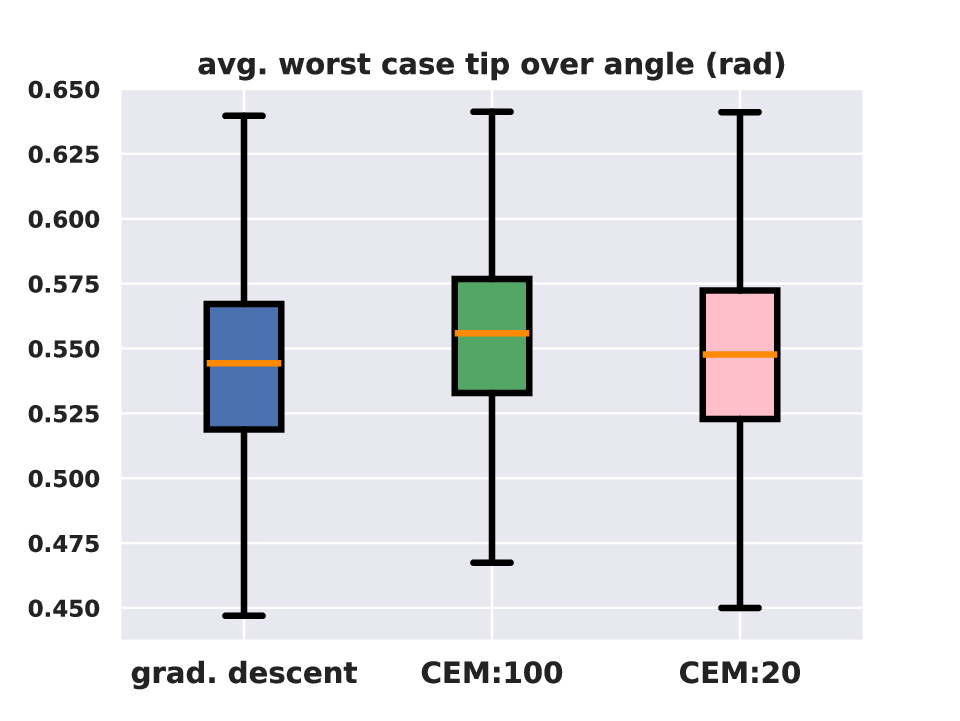}  
  \vspace{-5mm}
  \caption{\scriptsize gradient descent compared with CEM with batch sizes 100, 20.}
  \label{fig:stability_tip_comp_cem_grad}
  \vspace{-1mm}
\end{subfigure}
\caption{\footnotesize (a) Comparison of the effect of the stability cost, $c_s$. (b) Comparison between gradient descent and CEM.}
\label{fig:stability_comp_tip}
\vspace{-0.5cm}
\end{figure}

\section{conclusions} \label{sec:conclusions}
We presented, for the first time, a differentiable wheel-terrain interaction/pose predictor model that is derived purely from the first principles and yet generalizes to arbitrary terrains. The fidelity of our model was shown to be close to that of a state-of-the-art physics engine. This opens up exciting new possibilities. For example, our pose predictor can be used as a differentiable and parallelizable world model within trajectory optimization or even reinforcement learning pipelines. We presented one such approach based on bi-level optimization, wherein we leveraged implicit gradients from the NLS solver to efficiently compute the optimal trajectory. The proposed optimizer can accommodate arbitrary stability metrics that depend on the vehicle's pose on uneven terrains. As an example, we used the force-angle measure and demonstrated stable trajectory planning on uneven terrains. Our future endeavors are focused on extending the NLS-based approach to handle wheel-terrain dynamics.
\bibliographystyle{IEEEtran}
\bibliography{ref}

\begin{thebibliography}{10}
\providecommand{\url}[1]{#1}
\csname url@samestyle\endcsname
\providecommand{\newblock}{\relax}
\providecommand{\bibinfo}[2]{#2}
\providecommand{\BIBentrySTDinterwordspacing}{\spaceskip=0pt\relax}
\providecommand{\BIBentryALTinterwordstretchfactor}{4}
\providecommand{\BIBentryALTinterwordspacing}{\spaceskip=\fontdimen2\font plus
\BIBentryALTinterwordstretchfactor\fontdimen3\font minus
  \fontdimen4\font\relax}
\providecommand{\BIBforeignlanguage}[2]{{%
\expandafter\ifx\csname l@#1\endcsname\relax
\typeout{** WARNING: IEEEtran.bst: No hyphenation pattern has been}%
\typeout{** loaded for the language `#1'. Using the pattern for}%
\typeout{** the default language instead.}%
\else
\language=\csname l@#1\endcsname
\fi
#2}}
\providecommand{\BIBdecl}{\relax}
\BIBdecl

\bibitem{wiberg2021control}
V.~Wiberg, E.~Wallin, T.~Nordfjell, and M.~Servin, ``Control of rough terrain
  vehicles using deep reinforcement learning,'' \emph{IEEE robotics and
  automation letters}, vol.~7, no.~1, pp. 390--397, 2021.

\bibitem{gattupalli2013simulation}
A.~Gattupalli, V.~P. Eathakota, A.~K. Singh, and K.~Madhava~Krishna, ``A
  simulation framework for evolution on uneven terrains for synchronous drive
  robot,'' \emph{Advanced Robotics}, vol.~27, 2013.

\bibitem{agishev2022trajectory}
R.~Agishev, T.~Pet{\v{r}}{\'\i}{\v{c}}ek, and K.~Zimmermann, ``Trajectory
  optimization using learned robot-terrain interaction model in exploration of
  large subterranean environments,'' \emph{IEEE Robotics and Automation
  Letters}, vol.~7, no.~2, pp. 3365--3371, 2022.

\bibitem{papadopoulos2000force}
E.~Papadopoulos and D.~A. Rey, ``The force-angle measure of tipover stability
  margin for mobile manipulators,'' \emph{Vehicle System Dynamics}, vol.~33,
  no.~1, pp. 29--48, 2000.

\bibitem{rubinstein1999cross}
R.~Rubinstein, ``The cross-entropy method for combinatorial and continuous
  optimization,'' \emph{Methodology and computing in applied probability},
  vol.~1, pp. 127--190, 1999.

\bibitem{bouguelia2017unsupervised}
M.-R. Bouguelia, R.~Gonzalez, K.~Iagnemma, and S.~Byttner, ``Unsupervised
  classification of slip events for planetary exploration rovers,''
  \emph{Journal of Terramechanics}, vol.~73, pp. 95--106, 2017.

\bibitem{banerjee2020adaptive}
S.~Banerjee, J.~Harrison, P.~M. Furlong, and M.~Pavone, ``Adaptive
  meta-learning for identification of rover-terrain dynamics,'' \emph{arXiv
  preprint arXiv:2009.10191}, 2020.

\bibitem{ugenti2021learning}
A.~Ugenti, F.~Vulpi, A.~Milella, and G.~Reina, ``Learning and prediction of
  vehicle-terrain interaction from 3d vision,'' in \emph{Multimodal Sensing and
  Artificial Intelligence: Technologies and Applications II}, vol. 11785.\hskip
  1em plus 0.5em minus 0.4em\relax SPIE, 2021, pp. 167--173.

\bibitem{datar2023learning}
A.~Datar, C.~Pan, and X.~Xiao, ``Learning to model and plan for wheeled
  mobility on vertically challenging terrain,'' \emph{arXiv preprint
  arXiv:2306.11611}, 2023.

\bibitem{vsalansky2021pose}
V.~{\v{S}}alansk{\`y}, K.~Zimmermann, T.~Pet{\v{r}}{\'\i}{\v{c}}ek, and
  T.~Svoboda, ``Pose consistency kkt-loss for weakly supervised learning of
  robot-terrain interaction model,'' \emph{IEEE Robotics and Automation
  Letters}, vol.~6, no.~3, pp. 5477--5484, 2021.

\bibitem{eathakota2011quasi}
V.~Eathakota, G.~Aditya, and M.~Krishna, ``Quasi-static motion planning on
  uneven terrain for a wheeled mobile robot,'' in \emph{2011 IEEE/RSJ
  International Conference on Intelligent Robots and Systems}.\hskip 1em plus
  0.5em minus 0.4em\relax IEEE, 2011, pp. 4314--4320.

\bibitem{xu2023hybrid}
Z.~Xu, Y.~Chen, Z.~Jian, J.~Tan, X.~Wang, and B.~L. Liang, ``Hybrid trajectory
  optimization for autonomous terrain traversal of articulated tracked
  robots,'' \emph{IEEE Robotics and Automation Letters}, vol.~9, no.~1, pp.
  755--762, 2023.

\bibitem{xu2023efficient}
L.~Xu, K.~Chai, Z.~Han, H.~Liu, C.~Xu, Y.~Cao, and F.~Gao, ``An efficient
  trajectory planner for car-like robots on uneven terrain,'' in \emph{IEEE/RSJ
  International Conference on Intelligent Robots and Systems (IROS)}, 2023, pp.
  2853--2860.

\bibitem{feng2023learning}
W.~Feng, L.~Ding, R.~Zhou, C.~Xu, H.~Yang, H.~Gao, G.~Liu, and Z.~Deng,
  ``Learning-based end-to-end navigation for planetary rovers considering
  non-geometric hazards,'' \emph{IEEE Robotics and Automation Letters}, 2023.

\bibitem{hu2021sim}
H.~Hu, K.~Zhang, A.~H. Tan, M.~Ruan, C.~Agia, and G.~Nejat, ``A sim-to-real
  pipeline for deep reinforcement learning for autonomous robot navigation in
  cluttered rough terrain,'' \emph{IEEE Robotics and Automation Letters},
  vol.~6, no.~4, pp. 6569--6576, 2021.

\bibitem{weerakoon2022terp}
K.~Weerakoon, A.~J. Sathyamoorthy, U.~Patel, and D.~Manocha, ``Terp: Reliable
  planning in uneven outdoor environments using deep reinforcement learning,''
  in \emph{International Conference on Robotics and Automation (ICRA)}.\hskip
  1em plus 0.5em minus 0.4em\relax IEEE, 2022, pp. 9447--9453.

\bibitem{singh2016feasible}
A.~K. Singh and K.~M. Krishna, ``Feasible acceleration count: A novel dynamic
  stability metric and its use in incremental motion planning on uneven
  terrain,'' \emph{Robotics and Autonomous Systems}, vol.~79, pp. 156--171,
  2016.

\bibitem{gould2021deep}
S.~Gould, R.~Hartley, and D.~Campbell, ``Deep declarative networks,''
  \emph{IEEE Transactions on Pattern Analysis and Machine Intelligence},
  vol.~44, no.~8, pp. 3988--4004, 2021.

\bibitem{han2023efficient}
Z.~Han, Y.~Wu, T.~Li, L.~Zhang, L.~Pei, L.~Xu, C.~Li, C.~Ma, C.~Xu, S.~Shen
  \emph{et~al.}, ``An efficient spatial-temporal trajectory planner for
  autonomous vehicles in unstructured environments,'' \emph{IEEE Transactions
  on Intelligent Transportation Systems}, 2023.

\bibitem{rey1997online}
D.~Rey and E.~Papadoupoulos, ``Online automatic tipover prevention for mobile
  manipulators,'' in \emph{Proc. of the IEEE International Conference on
  Intelligent Robot and Systems}, vol.~3, 1997, pp. 1273--1278.

\bibitem{chakraborty2004kinematics}
N.~Chakraborty and A.~Ghosal, ``Kinematics of wheeled mobile robots on uneven
  terrain,'' \emph{Mechanism and machine theory}, vol.~39, no.~12, pp.
  1273--1287, 2004.

\bibitem{dontchev2009implicit}
A.~L. Dontchev and R.~T. Rockafellar, \emph{Implicit functions and solution
  mappings}.\hskip 1em plus 0.5em minus 0.4em\relax Springer, 2009, vol. 543.

\bibitem{jax}
``{Jax},'' \url{https://github.com/google/jax}.

\bibitem{koenig2004design}
N.~Koenig and A.~Howard, ``Design and use paradigms for gazebo, an open-source
  multi-robot simulator,'' in \emph{IEEE/RSJ international conference on
  intelligent robots and systems}, 2004, pp. 2149--2154.

\bibitem{blender}
``{Blender},'' \url{https://www.blender.org/}.

\end{thebibliography}
\end{document}